\definecolor{darkgreen}{rgb}{0,0.5,0}
\definecolor{darkred}{rgb}{0.7,0,0}
\definecolor{teal}{rgb}{0.3,0.8,0.8}
\newcommand{\kibitz}[2]{\ifnum\Comments=1{\textcolor{#1}{\textsf{\footnotesize #2}}}\fi}
\newcommand{\version}{arxiv}
\newenvironment{packed_enum}{
  \begin{enumerate}
    \setlength{\itemsep}{1pt}
    \setlength{\parskip}{-1pt}
    \setlength{\parsep}{0pt}
}{\end{enumerate}}
\newcounter{qcounter}
 {\end{list}}
\newcommand{\ellipsoid}{\gamma(T)}
\renewcommand{\order}{\ensuremath{O}}
\renewcommand{\otil}{\ensuremath{\tilde{\order}}}
\newcommand{\alg}{\textsc{bose}\xspace}
\newcommand{\reg}{\textrm{Reg}}
\newcommand{\linucb}{\textsc{OFUL}\xspace}
\newcommand{\monster}{\textsc{ILTCB}\xspace}
\newcommand{\greedy}{\textsc{EpsGreedy}\xspace}
\newcommand{\thompson}{\textsc{Thompson}\xspace}
\def\tran{^\top}
\newcommand{\pref}[1]{\prettyref{#1}}
\newcommand{\savehyperref}[2]{\texorpdfstring{\hyperref[#1]{#2}}{#2}}
\begin{document} 

\title{Semiparametric Contextual Bandits}

\author[2]{
  Akshay Krishnamurthy
  \thanks{akshay@cs.umass.edu}}
\author[2]{
  Zhiwei Steven Wu
  \thanks{zsw@umn.edu}}
\author[3]{
  Vasilis Syrgkanis
  \thanks{vasy@microsoft.com}}

\affil[2]{Microsoft Research NYC, New York, NY}
\affil[3]{Microsoft Research New England, Cambridge, MA}
\maketitle

\begin{abstract}
  This paper studies \emph{semiparametric contextual bandits}, a
  generalization of the linear stochastic bandit problem where the
  reward for an action is modeled as a linear function of known action
  features confounded by a non-linear action-independent term. We
  design new algorithms that achieve $\otil(d\sqrt{T})$ regret over
  $T$ rounds, when the linear function is $d$-dimensional, which
  matches the best known bounds for the simpler unconfounded case and
  improves on a recent result of~\citet{greenewald2017action}.
  Via an empirical evaluation, we show that our algorithms outperform
  prior approaches when there are non-linear confounding effects on
  the rewards.
  Technically, our algorithms use a new reward estimator inspired by
  doubly-robust approaches and our proofs require new concentration
  inequalities for self-normalized martingales.






\end{abstract}

\section{Introduction}
\label{sec:introduction}

A number of applications including online personalization, mobile
health, and adaptive clinical trials require that an agent repeatedly
makes decisions based on user or patient information with the goal of
optimizing some metric, typically referred to as a reward. For
example, in online personalization problems, we might serve content
based on user history and demographic information with the goal of
maximizing user engagement with our service. Since counterfactual
information is typically not available, these problems require
algorithms to carefully balance \emph{exploration}---making
  potentially suboptimal decisions to acquire new information---with
\emph{exploitation}---using collected information to make
  better decisions. Such problems are often best modeled with the
framework of \emph{contextual bandits}, which captures the
exploration-exploitation tradeoff and enables rich decision making
policies but ignores the long-term temporal effects that make general
reinforcement learning challenging. Contextual bandit algorithms have
seen recent success in applications, including news
recommendation~\cite{li2010contextual} and mobile
health~\cite{tewari2017ads}.

Contextual bandit algorithms can be categorized as either
\emph{parametric} or \emph{agnostic}, depending on whether they model
the relationship between the reward and the decision or
not. Parametric approaches typically assume that the reward is a
(generalized) linear function of a known decision-specific feature
vector~\citep{filippi2010parametric,chu2011contextual,abbasi2011improved,agrawal2013thompson}. Once
this function is known to high accuracy, it can be used to make
near-optimal decisions. Exploiting this fact, algorithms for this
setting focus on learning the parametric model. Unfortunately, fully
parametric assumptions are often unrealistic and challenging to verify
in practice, and these algorithms may perform poorly when the
assumptions do not hold.


In contrast, agnostic approaches make no modeling assumptions about
the reward and instead compete with a large class of decision-making
policies~\citep{langford2008epoch,agarwal2014taming}. While these
policies are typically parametrized in some way, these algorithms
provably succeed under weaker conditions and are generally more robust
than parametric ones. On the other hand, they typically have
worse statistical guarantees, are conceptually much more complex, and
have high computational overhead, technically requiring solving
optimization problems that are NP-hard in the worst case. This leads
us to a natural question:
\begin{quote}
\emph{Is there an algorithm that inherits the simplicity and
  statistical guarantees of the parametric methods \emph{and} the
  robustness of the agnostic ones?}
\end{quote}
Working towards an affirmative answer to this question, we consider a
semiparametric contextual bandit setup where the reward is modeled as
a linear function of the decision confounded by an additive non-linear
perturbation that is independent of the decision. This setup
significantly generalizes the standard parametric one, allowing for
complex, non-stationary, and non-linear rewards (See~\pref{sec:preliminaries} for a precise formulation). On the
other hand, since this perturbation is just a baseline reward for all
decisions, it has no influence on the optimal one, which depends
only on the unknown linear function. In the language of econometrics
and causal modeling, the \emph{treatment effect} is linear.

In this paper, we design new algorithms for the semiparametric
contextual bandits problem. When the linear part of the reward is
$d$-dimensional, our algorithms achieve $\otil(d\sqrt{T})$ regret over
$T$ rounds, even when the features and the confounder are chosen by an
adaptive adversary. This guarantee matches the best results for the
simpler linear stochastic bandit problem up to logarithmic terms,
showing that there is essentially no statistical price to pay for
robustness to confounding effects. On the other hand, our algorithm
and analysis is quite different, and it is not hard to see that
existing algorithms for stochastic bandits fail in our more general
setting. Our regret bound also improves on a recent result
of~\citet{greenewald2017action}, who consider the same setup but study
a weaker notion of regret. Our algorithm, main theorem, and
comparisons are presented in~\pref{sec:algorithm}.

We also compare our algorithm to approaches from both parametric and
agnostic families in an empirical study (we use a linear policy class
for agnostic approaches). In~\pref{sec:experiments}, we
evaluate several algorithms on synthetic problems where the reward is
(a) linear, and (b) linear with confounding. In the linear case, our
approach learns, but is slightly worse than the baselines. On the
other hand, when there is confounding, our algorithm significantly
outperforms both parametric and agnostic approaches. As such, these
experiments demonstrate that our algorithm represents a favorable
trade off between statistical efficiency and robustness.

On a technical level, our algorithm and analysis require several new
ideas. First, we derive a new estimator for linear models in the
presence of confounders, based on recent and classical work in
semiparametric statistics and
econometrics~\cite{robinson1988root,chernozhukov2016double}. Second,
since standard algorithms using optimism principles fail to guarantee
consistency of this new estimator, we design a new randomized
algorithm, which can be viewed as an adaptation of the
action-elimination method of~\citet{EMM06} to the contextual bandits
setting. Finally, analyzing the semiparametric estimator requires an
intricate deviation argument, for which we derive a new
self-normalized inequality for vector-valued martingales using tools
from~\citet{delapena2008self,delapena2009theory}.

\section{Preliminaries}
\label{sec:preliminaries}
We study a generalization of the linear stochastic bandit problem with
\emph{action-dependent} features and \emph{action-independent}
confounder. The learning process proceeds for $T$ rounds, and in round
$t$, the learner receives a context
$x_t \triangleq \{z_{t,a}\}_{a \in \Acal}$ where $z_{t,a} \in \RR^d$
and $\Acal$ is the action set, which we assume to be large but
finite. The learner then chooses an action $a_t \in \Acal$ and
receives reward
\begin{align}
  r_t(a_t) \triangleq \langle \theta, z_{t,a_t}\rangle + f_t(x_t) + \xi_t,
\end{align}
where $\theta \in \RR^d$ is an unknown parameter vector, $f_t(x_t)$ is
a confounding term that depends on the context $x_t$ but, crucially,
does not depend on the chosen action $a_t$, and $\xi_t$ is a noise
term that is centered and independent of $a_t$.

For each round $t$, let
$a_t^\star\triangleq\argmax_{a \in \Acal} \langle \theta,
z_{t,a}\rangle$ denote the optimal action for that round. The goal of
our algorithm is to minimize the regret, defined as
\begin{align*}
\textrm{Reg}(T) \triangleq \sum_{t=1}^T r_t(a_t^\star) - r_t(a_t)
= \sum_{t=1}^T\langle\theta, z_{t,a_t^\star} - z_{t,a_t}\rangle.
\end{align*}
Observe that the noise term $\xi_t$, and, more importantly, the
confounding term $f_t(x_t)$ are absent in the final expression, since
they are independent of the action choice.

We consider the challenging setting where the context $x_t$ and the
confounding term $f_t(\cdot)$ are chosen by an adaptive adversary, so
they may depend on all information from previous rounds. This is
formalized in the following assumption.
\begin{assum}[Environment]
\label{as:env}
We assume that $x_t = \{z_{t,a}\}_{a \in \Acal}, f_t, \xi_t$ are
generated at the beginning of round $t$, before $a_t$ is chosen. We
assume that $x_t$ and $f_t$ are chosen by an adaptive adversary, and
that $\xi_t$ satisfies $\EE[\xi_t | x_t,f_t] = 0$ and $|\xi_t| \le 1$.
\end{assum}

We also impose mild regularity assumptions on the parameter, the
feature vectors, and the confounding functions.
\begin{assum}[Boundedness]
\label{as:bd}
Assume that $\|\theta\|_2 \le 1$ and that $\|z_{t,a}\|_2 \le 1$ for
all $a \in \Acal, t \in [T]$. Further assume that
$f_t(\cdot) \in [-1,1]$ for all $t\in [T]$.
\end{assum}
For simplicity, we assume an upper bound of $1$ in these conditions,
but our algorithm and analysis can be adapted to more generic
regularity conditions.

\ifthenelse{\equal{\version}{workshop}}{}{
\paragraph{Related work.}
Our setting is related to linear stochastic bandits and several
variations that have been studied in recent years. Among these, the
closest is the work of~\citet{greenewald2017action} who consider the
same setup and provide a Thompson Sampling algorithm using a new
reward estimator that eliminates the confounding term. Motivated by
applications in medical intervention, they consider a different notion
of regret from our more-standard notion and, as such, the results are
somewhat incomparable. For our notion of regret, their analysis can
produce a $T^{2/3}$-style regret bound, which is worse than our
optimal $\sqrt{T}$ bound. See~\pref{sec:theorem} for a more detailed
comparison.



Other results for linear stochastic bandits include upper-confidence
bound
algorithms~\cite{rusmevichientong2010linearly,chu2011contextual,abbasi2011improved},
Thompson sampling
algorithms~\citep{agrawal2013thompson,russo2014learning}, and
extensions to generalized linear
models~\citep{filippi2010parametric,li2017provable}. However, none of
these models accommodate arbitrary and non-linear confounding
effects. Moreover, apart from Thompson sampling, all of these
algorithms use deterministic action-selection policies (conditioning
on the history), which provably incurs $\Omega(T)$ regret in our
setting, as we will see.

One can accommodate confounded rewards via an agnostic-learning
approach to contextual
bandits~\cite{auer2002nonstochastic,langford2008epoch,agarwal2014taming}. In
this framework, we make no assumptions about the reward, but rather
compete with a class of parameterized policies (or experts). Since a
$d$-dimensional linear policy is optimal in our setting,
an agnostic algorithm with a linear policy class addresses precisely
our notion of regret. However there are two disadvantages. First,
agnostic algorithms are all computationally intractable, either
because they enumerate the (infinitely large) policy class, or because
they assume access to optimization oracles that can solve NP-hard
problems in the worst case. Second, most agnostic approaches have
regret bounds that grow with $\sqrt{K}$, the number of actions, while
our bound is completely independent of $K$.

We are aware of one approach that is independent of $K$, but it
requires enumeration of an infinitely large policy class. This method
is based on ideas from the adversarial linear and combinatorial
bandits
literature~\citep{dani2008stochastic,abernethy2009competing,bubeck2012towards,cesa2012combinatorial}. Writing
$\theta_t \triangleq (\theta, f_t(x_t)) \in \RR^{d+1}$ and $z'_{t,a} \triangleq (z_{t,a},1) \in \RR^{d+1}$, our
setting can be re-formulated in the adversarial linear bandits
framework. However, standard linear bandit algorithms compete with the
best fixed action vector in hindsight, rather than the best policy
with time-varying action sets.
To resolve this, one can use the linear bandits reward
estimator~\citep{swaminathan2017off} in a contextual bandit algorithm
like EXP4~\citep{auer2002nonstochastic}, but this approach is not
computationally tractable with the linear policy class. For our
setting, we are not aware of any computationally efficient approaches,
even oracle-based approaches, that achieve $\textrm{poly}(d)\sqrt{T}$
regret with no dependence on the number of actions.

We resolve the challenge of confounded rewards with an estimator from
the semiparametric statistics
literature~\citep{tsiatis2007semiparametric}, which focuses on
estimating functionals of a nonparametric model. Most estimators are
based on \emph{Neyman Orthogonalization}~\citep{neyman1979c}, which
uses moment equations that are insensitive to nuisance parameters in a
method-of-moments approach~\citep{chernozhukov2016double}. These
orthogonal moments typically involve a linear correction to an initial
nonparametric estimate using so-called \emph{influence
  functions}~\citep{bickel1998efficient,robins2008higher}. \citet{robinson1988root}
used this approach for the offline version of our setting (known as
the partially linear regression (PLR) model) where he demonstrated a
form of \emph{double-robustness}~\citep{robins1992recovery} to poor
estimation of the nuisance term (in our case $f_t(x_t)$). We
generalize Robinson's work to the online setting, showing how
orthogonalized estimators can be used for adaptive exploration. This
requires several new techniques, including a novel action selection
policy and a self-normalized inequality for vector-valued martingales.
}

\section{Algorithm and Results}
\label{sec:algorithm}
\ifthenelse{\equal{\version}{workshop}}{}
{In this section, we describe our algorithm and present our main
theoretical result, an $\otil(d\sqrt{T})$ regret bound for the
semiparametric contextual bandits problem.
}
\ifthenelse{\equal{\version}{workshop}}{
}{
\subsection{A Lower Bound}
Before turning to the algorithm, we first present a lower bound
against deterministic algorithms. Since the functions $f_t$ may be
chosen by an adaptive adversary, it is not hard to show that this
setup immediately precludes the use of deterministic algorithms.
\begin{proposition}
  \label{prop:det_lb}
  Consider an algorithm that, at round $t$, chooses an action $a_t$ as
  a deterministic function of the observable history
  $H_t \triangleq \{x_{1:t}, a_{1:t-1}, r_{1:t-1}\}$. There exists a
  semiparametric contextual bandit instance with $d=2$ and $K=2$ where
  the regret of the algorithm is at least $T/2$.
\end{proposition}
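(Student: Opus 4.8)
The plan is to exploit the fact that, against a deterministic algorithm, the adaptive adversary can predict the action $a_t$ before committing to $f_t$, and can therefore use the action-independent confounder to erase all information about $\theta$ from the observed rewards. I would take $\Acal=\{1,2\}$ with fixed feature vectors $z_{t,1}=e_1=(1,0)$ and $z_{t,2}=e_2=(0,1)$ for all $t$, noise $\xi_t\equiv 0$, and compare two instances that differ only in the parameter: instance $A$ with $\theta^A=(1,0)$, so $a_t^\star=1$ for every $t$, and instance $B$ with $\theta^B=(0,1)$, so $a_t^\star=2$ for every $t$. Both satisfy \pref{as:bd}, and the environment conforms to \pref{as:env} with $x_t,\xi_t$ fixed in advance and only $f_t$ chosen adaptively, as specified next.

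The key step is an induction on $t$ showing that the deterministic algorithm generates an identical history, and hence the identical action $a_t$, in both instances. The features are fixed, so $x_t$ agrees; assuming $H_t$ agrees, determinism forces the same $a_t$. The adversary observes $H_t$ (and $x_t$ is predetermined), so it can compute $a_t$ before choosing $f_t$, and it picks the constant functions $f_t^A,f_t^B$ so that $\langle\theta^A,z_{t,a_t}\rangle+f_t^A=\langle\theta^B,z_{t,a_t}\rangle+f_t^B$: when $a_t=1$ this needs $1+f_t^A=f_t^B$, met by $f_t^A=-\tfrac12,\ f_t^B=\tfrac12$; when $a_t=2$ it needs $f_t^A=1+f_t^B$, met by $f_t^A=\tfrac12,\ f_t^B=-\tfrac12$. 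In either case the values lie in $[-1,1]$, so the confounders are legal, and the reward $r_t$ — hence $H_{t+1}$ — agrees across the two instances.

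With the action sequence $a_{1:T}$ identical in both instances, the regret computation is immediate: in instance $A$ the per-round regret is $\mathbf{1}[a_t=2]$ and in instance $B$ it is $\mathbf{1}[a_t=1]$, so $\reg^A(T)+\reg^B(T)=\sum_{t=1}^T\big(\mathbf{1}[a_t=2]+\mathbf{1}[a_t=1]\big)=T$. Therefore at least one of the two instances forces $\reg(T)\ge T/2$, which is the claim.

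The only place that needs care is the timing constraint in \pref{as:env} — that $f_t$ is generated before $a_t$ is chosen. This is not a real obstacle, because the adversary is adaptive and sees $H_t$, and the algorithm is a deterministic function of $H_t$; so the adversary can internally simulate the algorithm to determine $a_t$ and set $f_t$ accordingly, without ever observing the realized action. Everything else (the norm bounds, $\EE[\xi_t\mid x_t,f_t]=0$, and $|\xi_t|\le1$) holds trivially for this construction, so the remaining work is just bookkeeping.
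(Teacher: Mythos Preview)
Your proof is correct and follows essentially the same approach as the paper: fix the two actions $e_1,e_2$, consider the two parameter choices $\theta\in\{e_1,e_2\}$, and have the adaptive adversary use the confounder to make the observed reward independent of $\theta$, so the deterministic algorithm produces the identical action sequence in both instances and must be wrong at least $T/2$ times in one of them. The only cosmetic difference is the specific confounder: the paper sets $f_t=-\one\{a_t=a_t^\star\}$ so that $r_t\equiv 0$, whereas you set $f_t=\pm\tfrac12$ so that $r_t\equiv\tfrac12$; both choices lie in $[-1,1]$ and achieve the same indistinguishability.
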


See~\pref{app:lower} for the proof, which resembles the standard
argument against deterministic online learning
algorithms~\citep{cover1965behavior}.  The main difference is that the
adversary uses the confounding term to corrupt the information that
the learner receives, whereas, in the standard proof, the adversary
chooses the optimal action in response to the learner.
In fact, deterministic algorithms can
succeed in the full information version of our setting, since taking
differences between rewards eliminates the confounder. Thus, bandit
feedback plays a crucial role in our construction and the bandit
setting is considerably more challenging than the full information
analogue.


We emphasize that, except for the Thompson Sampling
approach~\cite{agrawal2013thompson}, essentially all algorithms for
the linear stochastic bandit problem use deterministic strategies, so
they provably fail in the semiparametric setting. As we mentioned,
Thompson Sampling was analyzed in our setting
by~\citet{greenewald2017action}, but they do not obtain the optimal
$\sqrt{T}$-type regret bound (See~\pref{sec:theorem} for a more
quantitative and detailed comparison). In contrast, our algorithm is
quite different from all of these approaches; it ensures enough
randomization to circumvent the lower bound and also achieves the
optimal $\sqrt{T}$ regret.

To conclude this discussion, we remark that the $\Omega(d\sqrt{T})$
lower bound for linear stochastic bandits~\citep{dani2008stochastic},
which also applies to randomized algorithms, holds in our more general
setting as well.

\subsection{The Algorithm}
}
\begin{algorithm*}[t]
\caption{\alg (Bandit orthogonalized semiparametric estimation)}
\label{alg:pseudocode}
\SetKwInOut{Inputa}{Input}
  \Inputa{$T, \delta \in (0,1)$.}
  Set $\lambda \gets 4d\log(9T) + 8\log(4T/\delta)$ and $\ellipsoid \gets \sqrt{\lambda} + \sqrt{27d\log(1+2T/d) + 54\log(4T/\delta)}$. \\
  Initialize $\hat{\theta} \gets 0 \in \RR^d, \Gamma \gets \lambda I_{d \times d}$.\\
  \For{$t = 1,\ldots,T$}{
    Observe $x_t = \{z_{t,a}\}_{a \in \Acal}$\\
    Filter
    \vspace{-0.3cm}
    \begin{align}
      \Acal_t \gets \left\{ a \in \Acal \mid \forall b \in \Acal, \langle \hat{\theta}, z_{t,b} - z_{t,a} \rangle \le \ellipsoid \|z_{t,a} - z_{t,b}\|_{\Gamma^{-1}}\right\}. \label{eq:filter}
    \end{align}\\
    \vspace{-0.3cm}
    Find distribution $\pi_t \in \Delta(\Acal_t)$ such that $\forall a \in \Acal_t$ (We use $\Cov_{b \sim \pi_t}(z_{t,b}) \triangleq \EE[ z_{t,b}z_{t,b}\tran] - (\EE z_{t,b})(\EE z_{t,b})\tran$.)
    \vspace{-0.3cm}
    \begin{align}
      \|z_{t,a} - \EE_{b\sim \pi_t} z_{t,b}\|_{\Gamma^{-1}}^2 \leq \tr(\Gamma^{-1}\Cov_{b \sim \pi_t}(z_{t,b})). \label{eq:optimize}
    \end{align}\\
    \vspace{-0.3cm}
    Sample $a_t \sim \pi_t$ and play $a_t$. Observe $r_t(a_t)$. ($r_t(a_t) = \langle \theta, z_{t,a_t}\rangle + f_t(x_t) + \xi_t$.)\\
    Let $\mu_t = \EE_{a \sim \pi_t}[ z_{t,a} \mid x_t]$ and update parameters
    \vspace{-0.3cm}
    \begin{align}
      \Gamma \gets \Gamma + (z_{t,a_t} - \mu_t)(z_{t,a_t} - \mu_t)\tran, \qquad
      \hat{\theta} \gets \Gamma^{-1}\sum_{\tau=1}^t (z_{\tau,a_\tau} - \mu_\tau)r_\tau(a_\tau). \label{eq:estimator}
    \end{align}
    \vspace{-0.5cm}
  }
\end{algorithm*}

Pseudocode for the algorithm, which we call \alg, for ``Bandit
Orthogonalized Semiparametric Estimation," is displayed in~\pref{alg:pseudocode}.  The algorithm maintains an estimate
$\hat{\theta}$ for the true parameter $\theta$, which it uses in each
round to select an action via two steps: (1) an action elimination
step that removes suboptimal actions, and (2) an optimization step
that finds a good distribution over the surviving actions. The
algorithm then samples and plays an action from this distribution and
uses the observed reward to update the parameter estimate
$\hat{\theta}$. This parameter estimation step is the third main
element of the algorithm. 
\ifthenelse{\equal{\version}{workshop}}{}{
We now describe each of these three
components in detail.

\paragraph{Parameter estimation.} For simplicity, we use
$z_t \triangleq z_{t,a_t}$ to denote the feature vector for the action that was
chosen at round $t$, and similarly we use $r_t \triangleq r_t(a_t)$. Using all
previously collected data, specifically
$\{z_\tau, r_\tau\}_{\tau=1}^t$ at the end of round $t$, we would like
to estimate the parameter $\theta$. First, if $f_\tau(x_\tau)$ were
identically zero, by exploiting the linear parametrization we could
use ridge regression, which with some $\lambda > 0$ gives
\begin{align*}
  \hat{\theta}_{\textrm{Ridge}} \triangleq \left(\lambda I + \sum_{\tau=1}^t z_\tau z_\tau\tran\right)^{-1}\sum_{\tau=1}^t z_\tau r_\tau.
\end{align*}
This estimator appears in most prior approaches for linear stochastic
bandits~\cite{rusmevichientong2010linearly,chu2011contextual,abbasi2011improved}. Unfortunately,
since $f_\tau(x_\tau)$ is non-zero, $\hat{\theta}_{\textrm{Ridge}}$
has non-trivial and non-vanishing bias, so even in benign settings it
is not a consistent estimator for $\theta$.\footnote{A related
  estimator \emph{can} be used to evaluate the reward of a policy, as
  in linear and combinatorial bandits~\citep{cesa2012combinatorial},
  but to achieve adequate exploration, one must operate over the
  policy class, which leads to computational intractability. We would
  like to use $\hat{\theta}$ to drive exploration, and this seems to
  require a consistent estimator. See~\pref{app:bias} for a
  simple example demonstrating how using a biased estimator in a
  confidence-based approach results in linear regret.}

Our approach to eliminating the bias from the confounding term
$f_\tau(x_\tau)$ is to center the feature vectors
$z_\tau$. Intuitively, in the ridge estimator, if $z_\tau$ is
centered, then $z_\tau(r_\tau - \langle\theta^\star,z_\tau\rangle)$ is mean zero, even when
there is non-negligible bias in the second term. As such, the error
of the corresponding estimator can be expected to concentrate around zero. In the semiparametric
statistics literature, this is known as \emph{Neyman
  Orthogonalization}~\citep{neyman1979c}, which was analyzed in the
context of linear regression by~\citet{robinson1988root} and in a more
general setting by~\citet{chernozhukov2016double}.

To center the feature vector, we will, at round $t$, choose action
$a_t$ by sampling from some distribution $\pi_t \in
\Delta(\Acal)$. Let
$\mu_t \triangleq \EE_{a_t \sim \pi_t}[ z_{t,a_t}|x_t]$ denote the
mean feature vector, taking expectation only over our random action
choice. With this notation, the orthogonalized estimator is
\ifthenelse{\equal{\version}{arxiv}}{
\begin{align*}
  \Gamma &= \lambda I + \sum_{\tau=1}^t(z_\tau- \mu_\tau)(z_\tau-\mu_\tau)\tran, \qquad
  \hat{\theta}  = \Gamma^{-1}\sum_{\tau=1}^t(z_\tau-\mu_\tau)r_\tau.
\end{align*}
}{
\begin{align*}
  \Gamma &= \lambda I + \sum_{\tau=1}^t(z_\tau- \mu_\tau)(z_\tau-\mu_\tau)\tran,\\
  \hat{\theta} & = \Gamma^{-1}\sum_{\tau=1}^t(z_\tau-\mu_\tau)r_\tau.
\end{align*}
}
$\hat{\theta}$ is a Ridge regression version of Robinson's classical
semiparametric regression estimator~\citep{robinson1988root}.
The estimator was originally derived for observational studies where
one might not know the \emph{propensities} $\mu_\tau$ exactly, and the
standard description involves estimates $\hat{f}_\tau$ and
$\hat{\mu}_\tau$ for the confounding term $f_\tau$ and the
propensities $\mu_\tau$ respectively. Informally, the estimator
achieves a form of double-robustness, in the sense that it is accurate
if either of these auxilliary estimators are.
In our case, since we know the propensities $\mu_\tau$ exactly, we can
use an inconsistent estimator for the confounding term, so we simply set $\hat{f}_\tau(x_\tau) \equiv 0$. 
\ifthenelse{\equal{\version}{workshop}}{We}{In~\pref{lem:estimator}, we} prove a precise finite sample
concentration inequality for this orthogonalized estimator, showing
that the confounding term $f_t(x_t)$ does not introduce any
bias. While the estimator has been studied in prior
works~\citep{robinson1988root}, to our knowledge, our error guarantee
is novel.




The convergence rate of the orthogonalized estimator depends on the
eigenvalues of the matrix $\Gamma$, and we must carefully select
actions to ensure these eigenvalues are sufficiently large. To see
why, notice that any deterministic action-selection approach with the
orthogonalized estimator (including confidence based approaches), will
fail, since $z_t = \mu_t$, so the eigenvalues of $\Gamma$ do not grow
rapidly and in fact the estimator is identically $0$. This argument
motivates our new action selection scheme which ensure substantial
conditional covariance.

\paragraph{Action selection.}
Our action selection procedure has two main elements. First using our
estimate $\hat{\theta}$, we eliminate any action that is provably
suboptimal. Based on our analysis for the estimator $\hat{\theta}$, at
round $t$, we can certify action $a$ is suboptimal, if we can find
another action $b$ such that
\begin{align*}
  \langle \hat{\theta}, z_{t,b} - z_{t,a} \rangle > \ellipsoid \|z_{t,b} - z_{t,a}\|_{\Gamma^{-1}}.
\end{align*}
Here $\ellipsoid$ is the constant specified in the algorithm, and
$\|x\|_{M} \triangleq \sqrt{x\tran Mx}$ denotes the Mahalanobis norm. Using our
confidence bound for $\hat{\theta}$\ifthenelse{\equal{\version}{workshop}}{}{ in~\pref{lem:estimator}
below}, this inequality certifies that action $b$ has higher expected
reward than action $a$, so we can safely eliminate $a$ from
consideration.

The next component is to find a distribution over the surviving
actions, denoted $\Acal'_t$ at round $t$, with sufficient
covariance. The distribution $\pi_t \in \Delta(\Acal'_t)$ that we use
is the solution to the following feasibility problem
\begin{align*}
\forall a \in \Acal_{t}', \quad  \|z_{t,a} - \EE_{b \sim \pi_t}z_{t,b}\|_{\Gamma^{-1}}^2 \leq \tr(\Gamma^{-1}\Cov_{b\sim\pi_t}(z_{t,b})).
\end{align*}
For intuition, the left hand side of the constraint for action $a$ is
an upper bound on the expected regret if $a$ is the optimal action on
this round. Thus, the constraints ensure that the regret is related to
the covariance of the distribution, which means that if we incur high
regret, the covariance term $\Cov_{b\sim\pi_t}(z_{t,b})$ will be
large. Since we use a sample from $\pi_t$ to update our parameter
estimate, this means that whenever the instantaneous regret is large,
we must learn substantially about the parameter.  In this way, the
distribution $\pi_t$ balances exploration and exploitation. We will
see \ifthenelse{\equal{\version}{workshop}}{}{in~\pref{lem:duality}} that this program is convex and always
has a feasible solution.

\ifthenelse{\equal{\version}{workshop}}{}{
Our action selection scheme bears some resemblance to
action-elimination approaches that have been studied in various bandit
settings~\cite{EMM06}. The main differences are that we adapt these
ideas to the contextual setting and carefully choose a distribution
over the surviving actions to balance exploration and exploitation.
}

\subsection{The Main Result}
\label{sec:theorem}
}We now turn to the main result, a regret guarantee for \alg.

\begin{theorem}
  \label{thm:regret}
  Consider the semiparametric contextual bandit problem under
  \pref{as:env} and~\pref{as:bd}. For any parameter
  $\delta \in (0,1)$, with probability at least $1-\delta$,
  \pref{alg:pseudocode} has regret at most
  $O(d\sqrt{T} \log(T/\delta))$.
\end{theorem}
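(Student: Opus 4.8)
The plan is to condition on a pair of high-probability events and then combine a per-round regret bound with an elliptical-potential argument, much as in confidence-based analyses of linear bandits, but with the orthogonalized estimator and the covariance feasibility program \eqref{eq:optimize} playing the roles of the confidence set. First I would invoke the concentration bound for the orthogonalized estimator (\pref{lem:estimator}). Writing $\Gamma_{t-1}(\hat\theta_{t-1}-\theta) = -\lambda\theta + \sum_{\tau<t}(z_\tau-\mu_\tau)\eta_\tau$ with $\eta_\tau \triangleq \langle\theta,\mu_\tau\rangle + f_\tau(x_\tau) + \xi_\tau$, one checks that centering by $\mu_\tau$ makes $\{(z_\tau-\mu_\tau)\eta_\tau\}$ a vector-valued martingale difference sequence with bounded increments — the subtraction of $\mu_\tau$ cancels both the confounder $f_\tau(x_\tau)$ and the bias of the linear term, and $\xi_\tau$ is conditionally mean-zero and independent of $a_\tau$ — so the self-normalized inequality gives, with probability at least $1-\delta/2$ and uniformly over $t$, $\|\hat\theta_{t-1}-\theta\|_{\Gamma_{t-1}}\le\ellipsoid$; call this event $\mathcal E_1$. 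On $\mathcal E_1$ the optimal action is never filtered out, since for any $b$, $\langle\hat\theta_{t-1}, z_{t,b}-z_{t,a_t^\star}\rangle \le 0 + \|\hat\theta_{t-1}-\theta\|_{\Gamma_{t-1}}\|z_{t,a_t^\star}-z_{t,b}\|_{\Gamma_{t-1}^{-1}}$, so $a_t^\star\in\Acal_t$. I would also invoke \pref{lem:duality} for feasibility of \eqref{eq:optimize}, whose proof is a short minimax argument (against any adversary distribution $q$ over $\Acal_t$, the choice $\pi_t = q$ makes the min-max value nonpositive, so Sion's theorem gives a feasible $\pi_t$).

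Next comes the per-round regret. On $\mathcal E_1$, since $a_t$ and $a_t^\star$ both lie in $\Acal_t$, applying the filter \eqref{eq:filter} to $a_t$ with witness $b=a_t^\star$ gives $\langle\hat\theta_{t-1}, z_{t,a_t^\star}-z_{t,a_t}\rangle \le \ellipsoid\|z_{t,a_t}-z_{t,a_t^\star}\|_{\Gamma_{t-1}^{-1}}$, and adding the Cauchy--Schwarz bound on $\langle\theta-\hat\theta_{t-1},\,\cdot\,\rangle$ yields $\langle\theta, z_{t,a_t^\star}-z_{t,a_t}\rangle \le 2\ellipsoid\|z_{t,a_t}-z_{t,a_t^\star}\|_{\Gamma_{t-1}^{-1}}$. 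Passing through $\mu_t$ by the triangle inequality and using \eqref{eq:optimize} for both $a_t$ and $a_t^\star$, each of $\|z_{t,a_t}-\mu_t\|_{\Gamma_{t-1}^{-1}}$ and $\|z_{t,a_t^\star}-\mu_t\|_{\Gamma_{t-1}^{-1}}$ is at most $\sqrt{\tr(\Gamma_{t-1}^{-1}\Cov_{b\sim\pi_t}(z_{t,b}))}$, so the realized instantaneous regret is at most $4\ellipsoid\sqrt{\tr(\Gamma_{t-1}^{-1}\Cov_{b\sim\pi_t}(z_{t,b}))}$, deterministically on $\mathcal E_1$. Summing and applying Cauchy--Schwarz over $t$, it remains to control $S \triangleq \sum_{t=1}^T \tr(\Gamma_{t-1}^{-1}\Cov_{b\sim\pi_t}(z_{t,b}))$.

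The realized counterpart $\sum_t \|z_{t,a_t}-\mu_t\|_{\Gamma_{t-1}^{-1}}^2$ is $O(d\log T)$ by the standard elliptical-potential lemma (using $\|z_{t,a_t}-\mu_t\|\le 2$ and $\lambda\ge 4$, so each summand is at most $1$ and $\log\det\Gamma_T-\log\det\Gamma_0 = O(d\log T)$). Since $\tr(\Gamma_{t-1}^{-1}\Cov_{b\sim\pi_t}(z_{t,b}))$ is exactly the conditional expectation of $\|z_{t,a_t}-\mu_t\|_{\Gamma_{t-1}^{-1}}^2$ given the history and $x_t$, the gap between $S$ and the realized sum is a martingale with increments in $[-1,1]$ and conditional variance at most $\tr(\Gamma_{t-1}^{-1}\Cov_{b\sim\pi_t}(z_{t,b}))$; Freedman's inequality combined with a self-bounding step then gives $S = O(d\log T + \log(1/\delta))$ with probability at least $1-\delta/2$ (event $\mathcal E_2$). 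On $\mathcal E_1\cap\mathcal E_2$ we obtain $\reg(T) \le 4\ellipsoid\sqrt{T S} = O\!\big(\ellipsoid\sqrt{T(d\log T + \log(1/\delta))}\big)$, and since $\ellipsoid = O(\sqrt{d\log T + \log(T/\delta)})$ by its definition, a crude simplification gives $\reg(T) = O(d\sqrt T\log(T/\delta))$; a union bound over $\mathcal E_1$ and $\mathcal E_2$ completes the proof.

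The main obstacle is \pref{lem:estimator}: proving a sharp self-normalized tail bound for the vector-valued, merely bounded (not sub-Gaussian) martingale $\sum_\tau (z_\tau-\mu_\tau)\eta_\tau$ whose conditioning structure is entangled with the evolving matrix $\Gamma_t$. This is where the de la Pe\~na--type machinery and the particular choice of $\lambda$ are forced, and it is the step I would budget the most effort for. A secondary but real point is ensuring the reduction from the conditional-covariance potential $S$ to the realized potential is lossless enough — Freedman rather than Azuma — so that the final rate stays $\sqrt T$ rather than degrading to $T^{3/4}$.
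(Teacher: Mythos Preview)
Your proposal is correct and tracks the paper's architecture: estimator concentration via a self-normalized inequality, survival of $a_t^\star$ under the filter, feasibility of \eqref{eq:optimize} to bound per-round regret by the covariance trace, Freedman to pass from conditional to realized covariance, and the elliptical-potential lemma to finish. One small but genuine difference in route: the paper first applies Azuma to pass from realized to expected per-round regret and then bounds $\langle\theta,z_{t,a_t^\star}-\mu_t\rangle$ (obtaining a factor $3\ellipsoid$ plus an additive $\sqrt{T\log(1/\delta)}$ term), whereas you bound the \emph{realized} instantaneous regret directly by invoking the filter condition for $a_t$ with witness $a_t^\star$ and then \eqref{eq:optimize} for both $a_t$ and $a_t^\star$. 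Your route is slightly cleaner---it eliminates the Azuma step entirely, at the price of the constant rising from $3$ to $4$. You also correctly identify the estimator lemma as the crux; for the general (many-action) case the paper carries out exactly the two-stage argument you gesture at: a de~la~Pe\~na-type self-normalized bound with normalizer $\hat\Sigma_t+\Sigma_t$ (since $Z_\tau$ is no longer conditionally symmetric), followed by a covering-based matrix concentration inequality showing $\lambda I+\hat\Sigma_t\succeq cI+(\hat\Sigma_t+\Sigma_t)/3$, which is precisely what forces $\lambda=\Theta(d\log(T/\delta))$.
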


\ifthenelse{\equal{\version}{workshop}}{}{
The constants, and indeed a bound depending on $\lambda$ and
$\ellipsoid$ can be extracted from the proof, provided in the
appendix.  To interpret the regret bound, it is worth comparing with
several related results:

\paragraph{Comparison with linear stochastic bandits.}
While most algorithms for linear stochastic bandits provably fail in
our setting (via~\pref{prop:det_lb}), the best regret
bounds here are $O(\sqrt{dT\log
  (TK/\delta)})$~\citep{chu2011contextual} and
$O(d\sqrt{T}\log(T)+\sqrt{dT\log(T)\log(1/\delta)})$~\citep{abbasi2011improved}
depending on whether we assume that the number of actions $K$ is small
or not. This latter result is optimal when the number of actions is
large~\citep{dani2008stochastic}, which is the setting we are
considering here.
Since our bound matches this optimal regret up to logarithmic factors,
and since linear stochastic bandits are a special case of our
semiparametric setting, our result is therefore also optimal up to
logarithmic factors. An interesting open question is whether an
$\otil(\sqrt{dT\log(K/\delta)})$ regret bound is achievable in the
semiparametric setting.

\paragraph{Comparison with agnostic contextual bandits.}
The best
oracle-based agnostic approaches achieve $\otil(\sqrt{dKT})$ regret~\cite{agarwal2014taming}, incurring
a polynomial dependence on the number of actions $K$, although there
is one inefficient method that can achieve
$\otil(d\sqrt{T})$,\footnote{This follows
  easily by combining ideas from~\citet{auer2002nonstochastic}
  and~\citet{cesa2012combinatorial}.} as we discussed previously. To
date, all efficient methods in the agnostic setting require some form
of i.i.d.~\citep{agarwal2014taming} or transductive
assumption~\citep{syrgkanis2016efficient,rakhlin2016bistro} on the
contexts, which we do not assume here.

\paragraph{Comparison with~\citet{greenewald2017action}.}
\citet{greenewald2017action} consider a very similar setting to ours,
where rewards are linear with confounding, but where one default
action $a_0$ always has $z_{t,a_0} \equiv 0 \in \RR^d$. Applications in
mobile health motivate a restriction that the algorithm choose the
$a_0$ action with probability $\in [p,1-p]$ for some small
$p \in (0,1)$. Their work also introduces a new notion of regret where
they compete with the policy that also satisfies this constraint but
otherwise chooses the optimal action $a_t^\star$. In this setup, they
obtain an $\otil(d^2\sqrt{T})$ regret bound, which has a worse
dimension dependence than~\pref{thm:regret}.

While the setup is somewhat different, we can still translate our
result into a regret bound in their setting, since \alg can support
the probability constraint, and by coupling the randomness between
\alg and the optimal policy, the regret is
unaffected.\footnote{Technically it is actually smaller by a factor of
  $(1-p)$.} On the other hand, since the constant in their regret
bound scales with $1/p$, their results as stated are vacuous when
$p=0$ which is precisely our setting. For our more challenging regret
definition, their analysis can produce a suboptimal $T^{2/3}$-style
regret bound, and in this sense,~\pref{thm:regret} provides a
quantitative improvement.


\paragraph{Summary.} \alg achieves essentially the same regret bound
as the best linear stochastic bandit methods, but in a much more
general setting. On the other hand, the agnostic methods succeed under
even weaker assumptions, but have worse regret guarantees and/or are
computationally intractable. Thus, \alg broadens the scope for
computationally efficient contextual bandit learning.
}

\ifthenelse{\equal{\version}{workshop}}{}
{
\section{Proof Sketch}
We sketch the proof of~\pref{thm:regret} in the
two-action case ($|\Acal|=2$), which has a much simpler proof that
preserves the main ideas. The technical machinery needed for the
general case is much more sophisticated, and we briefly describe some
of these steps at the end of this section, with a complete proof in
the Appendix.

In the two arm case, one should set
$\ellipsoid \triangleq \sqrt{\lambda} + \sqrt{9d\log(1+T/(d\lambda)) +
  18\log(T/\delta)}$ and $\lambda = O(1)$, which differs slightly from
the algorithm pseudocode for the more general case. Additionally, note
that with two actions, the uniform distribution over $\Acal_t$ is
always feasible for Problem~\pref{eq:optimize}. Specifically, if the
filtered set has cardinality $1$, we simply play that action
deterministically, otherwise we play one of the two actions uniformly
at random.

The proof has three main steps. First we analyze the orthogonalized
regression estimator defined in~\pref{eq:estimator}. Second, we study
the action selection mechanism and relate the regret incurred to the
error bound for the orthogonalized estimator. Finally, using a
somewhat standard potential argument, we show how this leads to a
$\sqrt{T}$-type regret bound. For the proof, let
$\hat{\theta}_t, \Gamma_t$ be the estimator and covariance matrix used
on round $t$, both based on $t-1$ samples.

For the estimator, we prove the following lemma for the
two action case. The main technical ingredient is a self-normalized
inequality for vector-valued martingales, which can be obtained using
ideas from~\citet{delapena2009theory}.
\begin{lemma}
  \label{lem:estimator}
  Under~\pref{as:env} and~\pref{as:bd}, let $K=2$ and
  $\ellipsoid \triangleq \sqrt{\lambda} + \sqrt{9 d \log(1+
    T/(d\lambda)) + 18\log(T/\delta)}$. Then, with probability at
  least $1-\delta$, the following holds simultaneously for all
  $t \in [T]$:
  \begin{align*}
    \|\hat{\theta}_t - \theta\|_{\Gamma_t} \leq \ellipsoid.
  \end{align*}
\end{lemma}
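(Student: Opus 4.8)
The plan is to turn the estimation error into a self-normalized vector-valued martingale and then control it with a de~la~Peña--style concentration inequality whose normalizer is the \emph{empirical} Gram matrix.

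\textbf{Step 1: a martingale decomposition.} Write $v_\tau \triangleq z_\tau - \mu_\tau$, so $\Gamma_t = \lambda I + \sum_{\tau<t} v_\tau v_\tau\tran$ and $\hat\theta_t = \Gamma_t^{-1}\sum_{\tau<t} v_\tau r_\tau$. Substituting $r_\tau = \langle\theta,z_\tau\rangle + f_\tau(x_\tau) + \xi_\tau$ and $\langle\theta,z_\tau\rangle = \langle\theta,v_\tau\rangle + \langle\theta,\mu_\tau\rangle$ gives $\sum_{\tau<t} v_\tau\langle\theta,z_\tau\rangle = (\Gamma_t-\lambda I)\theta + \sum_{\tau<t} v_\tau\langle\theta,\mu_\tau\rangle$, hence
\begin{align*}
  \hat\theta_t - \theta = \Gamma_t^{-1}\big(S_t - \lambda\theta\big), \qquad S_t \triangleq \sum_{\tau<t} v_\tau\, w_\tau, \qquad w_\tau \triangleq \langle\theta,\mu_\tau\rangle + f_\tau(x_\tau) + \xi_\tau .
\end{align*}
Let $\mathcal F_{\tau-1}$ be the $\sigma$-field of the first $\tau-1$ rounds together with $x_\tau,f_\tau$. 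Since $\mu_\tau=\EE[z_{\tau,a_\tau}\mid\mathcal F_{\tau-1}]$ we have $\EE[v_\tau\mid\mathcal F_{\tau-1}]=0$; because $\langle\theta,\mu_\tau\rangle+f_\tau(x_\tau)$ is $\mathcal F_{\tau-1}$-measurable and $\xi_\tau$ is centered and independent of $a_\tau$ given $\mathcal F_{\tau-1}$, it follows that $\EE[v_\tau w_\tau\mid\mathcal F_{\tau-1}]=0$, so $S_t$ is a vector-valued martingale. This is exactly the payoff of centering the features: the confounder $f_\tau(x_\tau)$ is absorbed into the noise $w_\tau$ without any bias. Finally, with two actions $v_\tau\in\{0,\ \pm\tfrac12(z_{\tau,1}-z_{\tau,2})\}$, so $\|v_\tau\|_2\le 1$, and $|w_\tau|\le\|\theta\|_2\|\mu_\tau\|_2+|f_\tau(x_\tau)|+|\xi_\tau|\le 3$ by \pref{as:env} and \pref{as:bd}.

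\textbf{Step 2: reduction.} Using $\Gamma_t\succeq\lambda I$ and $\|\theta\|_2\le 1$,
\begin{align*}
  \|\hat\theta_t-\theta\|_{\Gamma_t} = \|S_t-\lambda\theta\|_{\Gamma_t^{-1}} \le \|S_t\|_{\Gamma_t^{-1}} + \lambda\|\theta\|_{\Gamma_t^{-1}} \le \|S_t\|_{\Gamma_t^{-1}} + \sqrt\lambda .
\end{align*}
So it suffices to prove that with probability at least $1-\delta$, $\|S_t\|_{\Gamma_t^{-1}}^2 \le 9d\log(1+T/(d\lambda)) + 18\log(T/\delta)$ simultaneously for all $t\in[T]$; combined with the value of $\ellipsoid$ this gives the lemma.

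\textbf{Step 3: the self-normalized inequality (the main obstacle).} The difficulty is that $\Gamma_t=\lambda I+\sum_{\tau<t}v_\tau v_\tau\tran$ is the \emph{empirical} Gram matrix of the very vectors that drive $S_t$, and the $v_\tau$ are \emph{not} predictable (they depend on the sampled action). Thus the classical self-normalized bound of \citet{abbasi2011improved}, which assumes predictable multipliers, does not apply; and normalizing instead by the predictable Gram $\sum_{\tau<t}\EE[v_\tau v_\tau\tran\mid\mathcal F_{\tau-1}]$ is useless, since that matrix can be far larger than $\Gamma_t$ on a bad sample. The plan is to use the pseudo-maximization / method-of-mixtures machinery of \citet{delapena2008self,delapena2009theory}: for $\nu\in\RR^d$ build a nonnegative supermartingale of the form $\exp\big(\tfrac13\langle\nu,S_t\rangle - \tfrac12\nu\tran(\Gamma_t-\lambda I)\nu\big)$, verifying the one-step inequality using only $|w_\tau|\le 3$ --- the crucial point being that the random quadratic term $\langle\nu,v_\tau\rangle^2 w_\tau^2\le 9\langle\nu,v_\tau\rangle^2$ in the exponent is \emph{its own bound}, so no predictable-variance estimate is needed (this is the self-normalization phenomenon). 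Mixing over a Gaussian prior on $\nu$ and applying Markov's inequality yields, for each fixed $t$, $\|S_t\|_{\Gamma_t^{-1}}^2 \le 9\log\big(\det(\Gamma_t)/\det(\lambda I)\big) + 18\log(1/\delta')$ with probability at least $1-\delta'$; the constant $9=3^2$ is what produces the coefficients $9d$ and $18$ in the statement.

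\textbf{Step 4: potential bound and union.} Since $\|v_\tau\|_2\le 1$, $\tr(\Gamma_t-\lambda I)\le t-1\le T$, so by AM--GM on the eigenvalues $\det(\Gamma_t)/\det(\lambda I)\le(1+T/(d\lambda))^d$, giving $\log\big(\det(\Gamma_t)/\det(\lambda I)\big)\le d\log(1+T/(d\lambda))$. Taking $\delta'=\delta/T$ in Step~3 and union-bounding over $t\in[T]$ (or, equivalently, running the mixture argument up to a stopping time) yields the uniform bound of Step~2, completing the proof. The whole difficulty sits in Step~3 --- making the supermartingale argument go through with the \emph{empirical} normalizer $\Gamma_t$ and only a boundedness (rather than a ``factoring'' sub-Gaussian) assumption on $w_\tau$ is precisely what the de~la~Peña tools provide; everything else is linear algebra and a routine potential/union-bound argument.
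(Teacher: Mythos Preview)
Your proposal is the paper's argument: the same decomposition (their $Z_\tau,\zeta_\tau$ are your $v_\tau,w_\tau$), the same $\sqrt\lambda$ bias split, the same de~la~Pe\~{n}a mixture bound on $\|S_t\|_{\Gamma_t^{-1}}$, the same determinant/trace potential bound, and the same union over $t$.

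There is one genuine imprecision in Step~3. You claim the one-step supermartingale inequality is verified ``using only $|w_\tau|\le 3$,'' but boundedness of $w_\tau$ by itself does \emph{not} make $\exp\big(\tfrac13\langle\nu,v_\tau\rangle w_\tau-\tfrac12\langle\nu,v_\tau\rangle^2\big)$ have conditional expectation at most $1$: for mean-zero but asymmetric $v_\tau$ there are easy counterexamples, and in that regime the de~la~Pe\~{n}a increment must also carry the predictable quadratic $\EE[\langle\nu,v_\tau\rangle^2\mid\mathcal F_{\tau-1}]$---this is exactly why the paper's general-$K$ proof is forced to normalize by $\Gamma_t+\EE\Gamma_t$ and then spend a separate covering argument to return to $\Gamma_t$. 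What makes the empirical-only normalizer work here is the conditional \emph{symmetry} of $v_\tau$ that you recorded in Step~1 (with two actions and the uniform choice, $v_\tau$ is a Rademacher multiple of a fixed vector, conditionally independent of $w_\tau$): then
\[
\EE\!\left[e^{\frac13\langle\nu,v_\tau\rangle w_\tau-\frac12\langle\nu,v_\tau\rangle^2}\,\middle|\,\mathcal F_{\tau-1},w_\tau\right]
=\EE\!\left[\cosh\!\big(\tfrac13\langle\nu,v_\tau\rangle w_\tau\big)\,e^{-\frac12\langle\nu,v_\tau\rangle^2}\,\middle|\,\cdot\right]\le 1
\]
via $\cosh x\le e^{x^2/2}$ and $|w_\tau|\le 3$. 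Once you state and use the symmetry explicitly, Step~3 is precisely the paper's ``symmetric self-normalized inequality,'' and the rest of the proof is complete as written.
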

\begin{proof}
  Using the definitions and~\pref{as:env}, it is not hard to re-write
  \begin{align*}
    \hat{\theta}_t = \Gamma_t^{-1}(\Gamma_t - \lambda I)\theta + \Gamma_t^{-1}\sum_{\tau=1}^{t-1}Z_\tau \zeta_\tau,
  \end{align*}
  where $Z_\tau\triangleq z_{\tau,a_\tau} - \mu_\tau$ and
  $\zeta_\tau \triangleq \langle\theta,\mu_\tau\rangle + f_\tau(x_\tau) +
  \xi_\tau$. Further define
  $S_t \triangleq \sum_{\tau=1}^{t-1}Z_\tau \zeta_\tau$. Then, applying the
  triangle inequality the error is at most
  \begin{align*}
    \|\hat{\theta}_t - \theta\|_{\Gamma_t} \leq \|\lambda \theta\|_{\Gamma_t^{-1}} + \|S_t\|_{\Gamma_t^{-1}}.
  \end{align*}
  The first term here is at most $\sqrt{\lambda}$ since
  $\Gamma_t \succeq \lambda I$. To control the second term, we need to
  use a self-normalized concentration inequality, since $Z_\tau$ is a
  random variable, and the normalizing term
  $\Gamma_t = \lambda I + \sum_{\tau=1}^{t-1}Z_\tau Z_\tau\tran$ depends
  on the random realizations. In
  \pref{lem:self_normalized_symmetric} in the appendix, we prove
  that with probability at least $1-\delta$, for all $t \in [T]$
  \begin{align}
    \|S_t\|_{\Gamma_t^{-1}}^2 \leq 9d \log(1+T/(d\lambda)) + 18\log(T/\delta). \label{eq:two_arm_normalized}
  \end{align}
  The lemma follows from straightforward calculations.
\end{proof}

Before proceeding, it is worth commenting on the difference between
our self-normalized inequality~\pref{eq:two_arm_normalized} and a
slightly different one used by~\citet{abbasi2011improved} for the
linear case. In their setup, they have that $\zeta_\tau$ is
conditionally centered and sub-Gaussian, which simplifies the argument
since after fixing the $Z_\tau$s (and hence $\Gamma_t$), the
randomness in the $\zeta_\tau$s suffices to provide concentration. In
our case, we must use the randomness in $Z_\tau$ itself, which is more
delicate, since $Z_\tau$ affects the numerator $S_t$, but also the
normalizer $\Gamma_t$. In spite of this additional technical
challenge, the two self-normalized processes admit similar bounds.

Next, we turn to the action selection step, where recall that either a
single action is played deterministically, or the actions are played
uniformly at random.
\begin{lemma}
  \label{lem:selection_two_arm}
  Let $\mu_t \triangleq \EE_{a \sim \pi_t} z_{t,a}$ where $\pi_t$ is the
  solution to~\pref{eq:optimize}, and assume that the conclusion of
  \pref{lem:estimator} holds. Then with probability at least $1-\delta$
  \begin{align*}
    \reg(T) \leq \sqrt{2T\log(1/\delta)} + 2\ellipsoid \sum_{t=1}^T \sqrt{\tr(\Gamma_t^{-1}\Cov_{b \sim \pi_t}(z_{t,b}))}. 
  \end{align*}
\end{lemma}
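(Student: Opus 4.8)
The plan is to decompose the regret into a sum of per-round instantaneous regrets and bound each term using the structure of the action-selection procedure together with the confidence bound of Lemma~\ref{lem:estimator}. Fix a round $t$ and let $a_t^\star$ be the optimal action. If the filtered set $\Acal_t$ is a singleton, the surviving action must be $a_t^\star$ itself (since the elimination test never removes $a_t^\star$ when Lemma~\ref{lem:estimator} holds), so the round contributes zero regret. Otherwise both actions survive and we play uniformly. The key observation is that since $a_t^\star$ survived the filter, for the played action $a_t$ we have $\langle\theta, z_{t,a_t^\star} - z_{t,a_t}\rangle \le 2\ellipsoid\|z_{t,a_t^\star} - z_{t,a_t}\|_{\Gamma_t^{-1}}$ — a factor of $2$ rather than $1$ because the filter certifies suboptimality via $\hat\theta_t$, and converting to a statement about $\theta$ costs another $\ellipsoid\|\cdot\|_{\Gamma_t^{-1}}$ via Cauchy–Schwarz with Lemma~\ref{lem:estimator}. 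So the \emph{realized} instantaneous regret is at most $2\ellipsoid\|z_{t,a_t^\star} - z_{t,a_t}\|_{\Gamma_t^{-1}}$.

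Next I would relate this realized quantity to its conditional expectation. With two actions played uniformly, $\mu_t = \tfrac12(z_{t,1}+z_{t,2})$, and the centered feature $Z_t = z_{t,a_t}-\mu_t$ satisfies $\|Z_t\|_{\Gamma_t^{-1}} = \tfrac12\|z_{t,1}-z_{t,2}\|_{\Gamma_t^{-1}}$ deterministically, regardless of which action is drawn; moreover $\Cov_{b\sim\pi_t}(z_{t,b}) = \tfrac14(z_{t,1}-z_{t,2})(z_{t,1}-z_{t,2})\tran$, so $\tr(\Gamma_t^{-1}\Cov_{b\sim\pi_t}(z_{t,b})) = \tfrac14\|z_{t,1}-z_{t,2}\|_{\Gamma_t^{-1}}^2 = \|Z_t\|_{\Gamma_t^{-1}}^2$. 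Hence the realized regret bound $2\ellipsoid\|z_{t,a_t^\star}-z_{t,a_t}\|_{\Gamma_t^{-1}}$ is at most $4\ellipsoid\|Z_t\|_{\Gamma_t^{-1}} = 4\ellipsoid\sqrt{\tr(\Gamma_t^{-1}\Cov_{b\sim\pi_t}(z_{t,b}))}$; in the singleton case the same bound holds trivially since the left side is zero. (The feasibility constraint~\eqref{eq:optimize} is exactly what makes this clean: the squared Mahalanobis norm of the gap from $\mu_t$ is controlled by the trace term.) Summing over $t$ would then give a deterministic bound of $4\ellipsoid\sum_t\sqrt{\tr(\Gamma_t^{-1}\Cov_{b\sim\pi_t}(z_{t,b}))}$ — but this is slightly off from the stated bound, which has a leading constant $2$ and an extra additive $\sqrt{2T\log(1/\delta)}$ term.

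That mismatch points to the actual argument: rather than bounding the realized regret directly, one bounds the per-round \emph{expected} regret $\EE_{a_t\sim\pi_t}\langle\theta, z_{t,a_t^\star}-z_{t,a_t}\rangle$ by $2\ellipsoid\sqrt{\tr(\Gamma_t^{-1}\Cov_{b\sim\pi_t}(z_{t,b}))}$ using constraint~\eqref{eq:optimize} (the left side of~\eqref{eq:optimize} for $a=a_t^\star$ bounds the expected gap after one Cauchy–Schwarz and one application of Lemma~\ref{lem:estimator}, yielding the factor $2\ellipsoid$), and then passes from expected to realized regret via a martingale concentration (Azuma–Hoeffding) argument. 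Define the martingale difference $D_t = \langle\theta, z_{t,a_t^\star}-z_{t,a_t}\rangle - \EE_{a_t\sim\pi_t}\langle\theta, z_{t,a_t^\star}-z_{t,a_t}\rangle$, which is bounded in $[-2,2]$ by Assumption~\ref{as:bd} but more carefully has range at most... — actually since the instantaneous regret is nonnegative and at most $2$, $D_t\in[-2,2]$; to get the cleaner $\sqrt{2T\log(1/\delta)}$ one uses that $|D_t|\le 1$ or a variance bound. Applying Azuma gives $\sum_t D_t \le \sqrt{2T\log(1/\delta)}$ with probability $1-\delta$, and combining with the expected-regret bound yields exactly the stated inequality. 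The main obstacle is getting the constants and the boundedness range of $D_t$ exactly right so that the concentration term comes out as $\sqrt{2T\log(1/\delta)}$ rather than something larger; the structural core — using~\eqref{eq:filter} so that $a_t^\star$ is never eliminated, using~\eqref{eq:optimize} to bound the expected gap by the trace, and using Lemma~\ref{lem:estimator} to convert $\hat\theta_t$-statements to $\theta$-statements — is routine once assembled in this order.
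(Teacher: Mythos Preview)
Your proposal is correct and follows essentially the same route as the paper: bound the conditional expected instantaneous regret by $2\ellipsoid\sqrt{\tr(\Gamma_t^{-1}\Cov_{b\sim\pi_t}(z_{t,b}))}$ using \pref{eq:filter}, \pref{lem:estimator}, and a direct two-arm computation (which is equivalent to invoking \pref{eq:optimize}), then pass from expected to realized regret via Azuma--Hoeffding. On the constant you flagged: the instantaneous regret $\langle\theta, z_{t,a_t^\star}-z_{t,a_t}\rangle$ lies in $[0,2]$ under \pref{as:bd}, so the martingale difference $D_t$ has range exactly $2$, and the one-sided Azuma--Hoeffding inequality with range-$2$ increments yields precisely $\sqrt{2T\log(1/\delta)}$.
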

\begin{proof}
  We first study the instantaneous regret, taking expectation over the
  random action. For this, we must consider two
  cases. First, with~\pref{lem:estimator}, if $|\Acal_t| = 1$, we
  argue that the regret is actually zero. This follows from the
  Cauchy-Schwarz inequality since assuming $\Acal_t = \{a\}$ we get
  \begin{align*}
    \langle \theta, z_{t,a} - z_{t,b}\rangle &\ge \langle \hat{\theta}_t, z_{t,a} - z_{t,b}\rangle - \ellipsoid \|z_{t,a} - z_{t,b}\|_{\Gamma_t^{-1}}
  \end{align*}
  which is non-negative using the fact that $b$ was
  eliminated. Therefore $a$ is the optimal action and we incur no
  regret. Since $\pi_t$ has no covariance, the upper bound holds.

  On the other rounds, we set $\pi_t = \textrm{Unif}(\{a,b\})$ and
  hence $\mu_t = (z_{t,a} + z_{t,b})/2$. Assuming again that $a$ is
  the optimal action, the expected regret is
\ifthenelse{\equal{\version}{arxiv}}{
  \begin{align*}
    & \langle \theta, z_{t,a} - \mu_t\rangle = \frac{1}{2}\langle \theta, z_{t,a} - z_{t,b}\rangle
    \leq \frac{1}{2}\left(\langle \hat{\theta}_t, z_{t,a} - z_{t,b}\rangle + \ellipsoid \|z_{t,a}-z_{t,b}\|_{\Gamma_t^{-1}}\right)\\
    & \leq \ellipsoid \|z_{t,a}-z_{t,b}\|_{\Gamma_t^{-1}}
    \leq 2 \ellipsoid \sqrt{\tr(\Gamma_t^{-1}\Cov_{b \sim \pi_t} (z_{t,b}))}.
  \end{align*}
}{
  \begin{align*}
    & \langle \theta, z_{t,a} - \mu_t\rangle = \frac{1}{2}\langle \theta, z_{t,a} - z_{t,b}\rangle\\
    & \leq \frac{1}{2}\left(\langle \hat{\theta}_t, z_{t,a} - z_{t,b}\rangle + \ellipsoid \|z_{t,a}-z_{t,b}\|_{\Gamma_t^{-1}}\right)\\
    & \leq \ellipsoid \|z_{t,a}-z_{t,b}\|_{\Gamma_t^{-1}}
    \leq 2 \ellipsoid \sqrt{\tr(\Gamma_t^{-1}\Cov_{b \sim \pi_t} (z_{t,b}))}.
  \end{align*}
}
  Here the first inequality uses Cauchy-Schwarz, the second
  uses~\pref{eq:filter}, since neither action was eliminated, and the
  third uses~\pref{eq:optimize}. This bounds the expected regret, and
  the lemma follows by Azuma's inequality.
\end{proof}

The last step of the proof is to control the sequence
\begin{align*}
\sum_{t=1}^T \sqrt{\tr(\Gamma_t^{-1} \Cov_{b \sim \pi_t}(z_{t,b}))}.
\end{align*}
First, recall that
$$
\Cov_{b\sim \pi_t}(z_{t,b}) \triangleq \EE_{b\sim \pi_t}\left[( z_{t,b}-\mu_t )
  (z_{t, b} -\mu_t)\tran \right]
$$
with $\mu_t \triangleq \EE_{b\sim \pi_t} [z_{t,b}]$. Since in the two-arm case
$\pi_t$ either chooses an arm deterministically or uniformly
randomizes between the two arms, the following always holds:
\[
\Cov_{b\sim \pi_t}(z_{t,b}) = (z_{t, a_t} - \mu_t)(z_{t, a_t} - \mu_t)\tran.
\]
It follows that
$\Gamma_{t+1} \triangleq \Gamma_t + \Cov_{b \sim \pi_t}(z_{t,b})$, and with
$\Gamma_1 \triangleq \lambda I$, the standard potential argument for online
ridge regression applies. We state the conclusion here, and provide a
complete proof in the appendix.

\begin{lemma}
  \label{lem:regret}
  Let $\Gamma_t$, $\pi_t$ be defined as above and define $M_t \triangleq (z_{t,a_t} - \mu_t)(z_{t,a_t}-\mu_t)\tran$. Then
  \begin{align*}
    \sum_{t=1}^T\sqrt{\tr(\Gamma_t^{-1}M_t)} \leq \sqrt{dT(1+1/\lambda)\log(1+T/(d\lambda))}.
  \end{align*}
\end{lemma}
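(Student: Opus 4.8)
The plan is to run the standard ``elliptical potential'' argument for online ridge regression, applied to the sequence of rank-one updates $M_t = (z_{t,a_t}-\mu_t)(z_{t,a_t}-\mu_t)\tran$, with $\Gamma_{t+1} = \Gamma_t + M_t$ and $\Gamma_1 = \lambda I$. First I would bound the summand pointwise: since $M_t$ is rank one, $\tr(\Gamma_t^{-1}M_t) = (z_{t,a_t}-\mu_t)\tran\Gamma_t^{-1}(z_{t,a_t}-\mu_t) = \|z_{t,a_t}-\mu_t\|_{\Gamma_t^{-1}}^2$, and by \pref{as:bd} together with the fact that $\mu_t$ is a convex combination of feasible feature vectors, $\|z_{t,a_t}-\mu_t\|_2 \le 2$, so this Mahalanobis quantity is at most $4/\lambda$ (using $\Gamma_t \succeq \lambda I$). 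The key identity I would then invoke is the matrix-determinant lemma: for a rank-one update, $\det(\Gamma_{t+1}) = \det(\Gamma_t)\,(1 + \|z_{t,a_t}-\mu_t\|_{\Gamma_t^{-1}}^2)$, hence $\tr(\Gamma_t^{-1}M_t) = \|z_{t,a_t}-\mu_t\|_{\Gamma_t^{-1}}^2 \le \frac{1+1/\lambda}{\log(1+1/\lambda)}\log\!\big(\det(\Gamma_{t+1})/\det(\Gamma_t)\big)$, using the elementary inequality $u \le \frac{c}{\log(1+c)}\log(1+u)$ valid for $0 \le u \le c$ with $c = 1/\lambda$... actually the cleaner route is $u \le (1+1/\lambda)\log(1+u)$ when $0\le u\le 1/\lambda$? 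Let me instead just use $\log(1+u)\ge u/(1+c)$ for $u\in[0,c]$, i.e.\ $u \le (1+c)\log(1+u)$ with $c$ chosen so $4/\lambda$ fits; this gives the $(1+1/\lambda)$ factor.

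Next I would pass from the sum of $\tr(\Gamma_t^{-1}M_t)$ to the sum of square roots via Cauchy--Schwarz: $\sum_{t=1}^T \sqrt{\tr(\Gamma_t^{-1}M_t)} \le \sqrt{T \sum_{t=1}^T \tr(\Gamma_t^{-1}M_t)}$. The inner sum telescopes through the determinant bound: $\sum_{t=1}^T \tr(\Gamma_t^{-1}M_t) \le (1+1/\lambda)\sum_{t=1}^T \log\!\big(\det(\Gamma_{t+1})/\det(\Gamma_t)\big) = (1+1/\lambda)\log\!\big(\det(\Gamma_{T+1})/\det(\Gamma_1)\big)$. Finally I would bound $\det(\Gamma_{T+1})/\det(\lambda I)$: since $\Gamma_{T+1} = \lambda I + \sum_{t=1}^T M_t$ and $\tr(\sum M_t) \le 4T$, by AM--GM on the eigenvalues $\det(\Gamma_{T+1}) \le \big(\lambda + \tfrac{1}{d}\sum_t \tr(M_t)\big)^d \le (\lambda + 4T/d)^d$, so $\log(\det(\Gamma_{T+1})/\det(\lambda I)) \le d\log(1+4T/(d\lambda))$. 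Combining, $\sum_t\sqrt{\tr(\Gamma_t^{-1}M_t)} \le \sqrt{dT(1+1/\lambda)\log(1+4T/(d\lambda))}$, which matches the stated bound up to the constant $4$ inside the log (presumably absorbed or tightened by a sharper bound $\|z_{t,a_t}-\mu_t\|_2\le 1$ in their normalization, or simply a cosmetic discrepancy).

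I do not expect any genuine obstacle here — this is the textbook potential argument (cf.\ \citet{abbasi2011improved}), and the only things to be careful about are (i) that the update quantity is $\|z_{t,a_t}-\mu_t\|$ rather than $\|z_{t,a_t}\|$, so the crude bound is $2$ not $1$, which only affects constants, and (ii) getting the elementary inequality $u \le (1+1/\lambda)\log(1+u)$ on the right range so that the $(1+1/\lambda)$ prefactor comes out exactly as claimed. The one point requiring a sentence of justification, rather than being purely mechanical, is that $\mu_t = \EE_{b\sim\pi_t} z_{t,b}$ with $\pi_t$ supported on actions satisfying $\|z_{t,b}\|_2\le 1$, so $M_t$ is well-defined and bounded; everything else is determinant algebra and Cauchy--Schwarz.
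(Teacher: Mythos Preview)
Your proposal is correct and follows essentially the same elliptical-potential argument as the paper: Cauchy--Schwarz, a telescoping log-determinant bound, and the AM--GM estimate on $\det(\Gamma_{T+1})$. The only cosmetic differences are that the paper extracts the $(1+1/\lambda)$ factor via Sherman--Morrison (comparing $\|X_t\|_{\Gamma_{t+1}^{-1}}^2$ with $\|X_t\|_{\Gamma_t^{-1}}^2$) rather than your inequality $u\le(1+c)\log(1+u)$, and your constant worry is moot in the two-action context of this lemma, where $z_{t,a_t}-\mu_t = \pm\tfrac12(z_{t,1}-z_{t,2})$ has norm at most $1$, so the paper's hypothesis $\|X_t\|_2\le 1$ holds exactly.
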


Combining the three lemmas establishes a regret bound of
\begin{align*}
\reg(T) \leq \order\left(\sqrt{Td\log(T/\delta)\log(T/d)} +
  d\sqrt{T}\log(T/d)\right)
\end{align*}
with probability at least $1 - \delta$ in the two-action case.

\paragraph{Extending to many actions.}
Several more technical steps are required for the general
setting. First, the martingale inequality used in~\pref{lem:estimator}
requires that the random vectors are symmetric about the origin. This
is only true for the two-action case, and in fact a similar inequality
does not hold in general for the non-symmetric situation that arises
with more actions. In the non-symmetric case, both the empirical and
the population covariance must be used in the normalization, so the
analogue of~\pref{eq:two_arm_normalized} is instead
\begin{align*}
\|S_t\|_{(\Gamma_t + \EE \Gamma_t)^{-1}}^2 \leq 27d\log(1+2T/d)+54\log(4T/\delta).
\end{align*}
On the other hand, the error term for our estimator depends only on
the empirical covariance $\Gamma_t$. To correct for the discrepancy,
we use a covering argument\footnote{For technical reasons, the Matrix
  Bernstein inequality does not suffice here since it introduces a
  dependence on the maximal variance. See Appendix for details.} to
establish
\begin{align*}
\lambda I + \Gamma_t \succeq (\lambda - 6d\log(T/\delta))I + (\Gamma_t + \EE \Gamma_t)/3.
\end{align*}
With this semidefinite inequality, we can translate from the
Mahalanobis norm in the weaker self-normalized bound to one with just
$\Gamma_t$, which controls the error for the estimator.

We also argue that problem~\pref{eq:optimize} is always feasible,
which is the contents of the following lemma.
\begin{lemma}
  \label{lem:duality}
  Problem~\pref{eq:optimize} is convex and always has a feasible
  solution. Specifically, for any vectors $z_1,\ldots, z_n \in \RR^d$
  and any positive definite matrix $M$, there exists a distribution
  $w \in \Delta([n])$ with mean $\mu_w \triangleq \EE_{b \sim w}z_b$ such that
  \begin{align*}
    \forall i \in [n],  \|z_i - \mu_w\|_{M}^2 \leq \tr(M\Cov_{b \sim w}(z_b)).
  \end{align*}
\end{lemma}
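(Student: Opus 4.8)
The plan is to exhibit an explicit feasible point, namely the distribution $w^\star$ that \emph{maximizes} the variance functional $F(w)\triangleq\tr\!\big(M\,\Cov_{b\sim w}(z_b)\big)$ over the simplex, and then read the desired inequality directly off its first-order optimality conditions. The same expansion that proves the convexity claim of the lemma also establishes concavity of $F$, so the argument handles both assertions at once.

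First I would expand everything in terms of $w$. Writing $Z$ for the $d\times n$ matrix with columns $z_b$ and $\mu_w\triangleq Zw$, we have $\Cov_{b\sim w}(z_b)=\sum_b w_b z_b z_b\tran-\mu_w\mu_w\tran$, hence $\tr(M\Cov_{b\sim w}(z_b))=\sum_b w_b\, z_b\tran M z_b - w\tran Z\tran M Z\, w$ and $\|z_i-\mu_w\|_M^2 = z_i\tran M z_i - 2 z_i\tran M Zw + w\tran Z\tran M Z w$. Since $M\succeq 0$ forces $Z\tran M Z\succeq 0$, the map $w\mapsto w\tran Z\tran M Z w$ is convex; therefore each constraint $\|z_i-\mu_w\|_M^2-\tr(M\Cov_{b\sim w}(z_b))\le 0$ is a convex inequality in $w$ (a positive-semidefinite quadratic plus an affine term), so the feasible set of~\pref{eq:optimize} is convex. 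Dually, $F(w)=\sum_b w_b z_b\tran M z_b - w\tran Z\tran M Z w$ is concave and continuous on the compact simplex $\Delta([n])$, so a maximizer $w^\star$ exists.

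Next I would write the optimality conditions at $w^\star$. The gradient is $\nabla_i F(w)=z_i\tran M z_i - 2 z_i\tran M\mu_w$. For every $i\in[n]$ and every $j$ in the support of $w^\star$, the direction $e_i-e_j$ is feasible at $w^\star$ (this uses exactly $w^\star_j>0$), so maximality gives $\nabla_i F(w^\star)\le\nabla_j F(w^\star)$; since the right-hand side is constant over $j\in\mathrm{supp}(w^\star)$ and $\sum_{j}w^\star_j=1$, averaging yields $\nabla_i F(w^\star)\le\sum_j w^\star_j\nabla_j F(w^\star)$ for all $i$. Substituting the gradient expression, this reads $z_i\tran M z_i - 2 z_i\tran M\mu_{w^\star}\le \sum_j w^\star_j z_j\tran M z_j - 2\mu_{w^\star}\tran M\mu_{w^\star}$. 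Using the identity $\sum_j w^\star_j z_j\tran M z_j = \tr(M\Cov_{b\sim w^\star}(z_b))+\mu_{w^\star}\tran M\mu_{w^\star}$ and adding $\mu_{w^\star}\tran M\mu_{w^\star}$ to both sides collapses this to $\|z_i-\mu_{w^\star}\|_M^2\le\tr(M\Cov_{b\sim w^\star}(z_b))$, which is exactly the claim.

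The only step needing care is the first-order condition: one must justify that $e_i-e_j$ is a feasible perturbation direction at $w^\star$, which holds precisely because $w^\star_j>0$ for $j$ in the support, and one should note the degenerate case where the maximal value of $F$ is $0$ — but since $M\succ 0$ this forces $\Cov_{b\sim w^\star}(z_b)=0$, i.e.\ all $z_i$ in the support equal $\mu_{w^\star}$, and then the KKT inequality already gives $z_i=\mu_{w^\star}$ for every $i$, so every constraint is trivially satisfied. Everything else (existence of the maximizer, the algebraic rearrangements) is routine; I do not expect a genuine obstacle — the substantive content is simply the observation that the variance-maximizing design is automatically feasible for~\pref{eq:optimize}.
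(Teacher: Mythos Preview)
Your proof is correct and takes a genuinely different route from the paper. The paper argues via convex duality: it considers $\min_{w}\max_{i}\bigl\{\|z_i-\mu_w\|_M^2-\tr(M\Cov_w(z))\bigr\}$, linearizes the inner maximum as a max over distributions $v\in\Delta([n])$, swaps the min and max via Sion's minimax theorem, and then upper-bounds the resulting $\max_v\min_w$ by plugging in $w=v$, which collapses the expression to $0$. Your argument is instead constructive: you exhibit the feasible point explicitly as the variance-maximizing distribution $w^\star=\argmax_w F(w)$ and read the constraints directly off the first-order (KKT) conditions at $w^\star$. Your approach is more elementary---it avoids invoking Sion's theorem---and more informative, since it identifies \emph{which} distribution is feasible; the paper's duality argument is nonconstructive in this respect, though the substitution $w=v$ in the upper bound quietly points at the same object. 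The paper's route, on the other hand, lets the convexity structure do all the work in a single line once Sion applies. Your aside on the degenerate case $F(w^\star)=0$ is harmless but unnecessary: the KKT inequality you already derived covers it without any additional case analysis.
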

The proof uses convex duality. Integrating these new arguments into
the proof for the two-action case leads to~\pref{thm:regret}.
}
\begin{figure*}[t]
\includegraphics[width=\textwidth]{./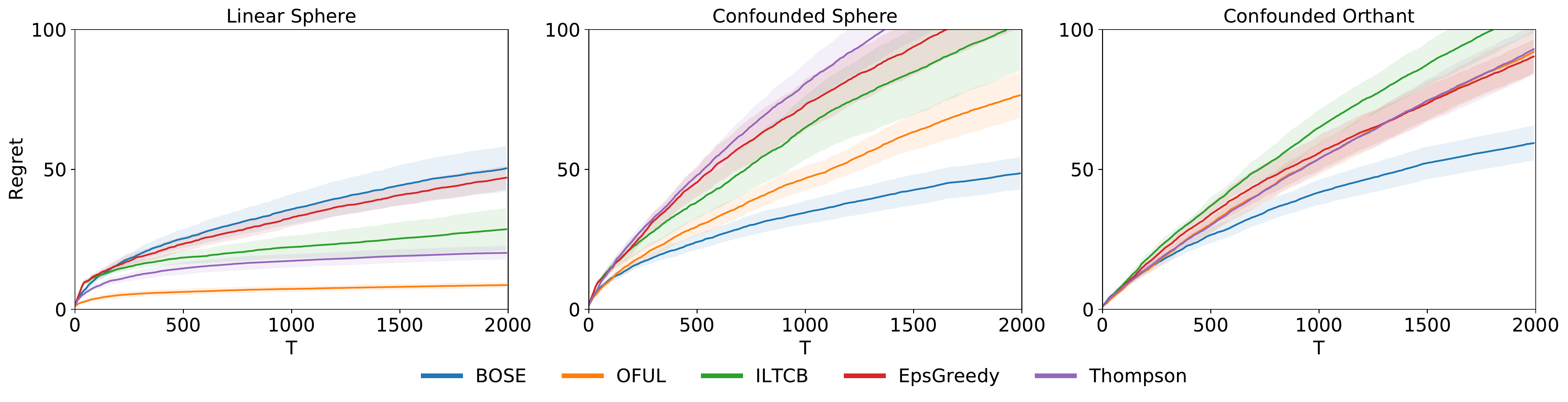} 
\vspace{-0.65cm}
\caption{Synthetic experiments with
  $d=10,K=2$. Left: A linear environment where action-features are
  uniformly from the unit sphere. Center: A confounded
  environment with features from the sphere. Right: A confounded
  environment with features from the sphere intersected with the
  positive orthant. Algorithms are \alg, \linucb
  \citep{abbasi2011improved}, \monster~\citep{agarwal2014taming},
  \greedy~\citep{langford2008epoch}, and
  \thompson~\citep{agrawal2013thompson}. Agnostic approaches use a
  linear policy class.}
\label{fig:exp}
\end{figure*}

\section{Experiments}
\label{sec:experiments}
We conduct a simple experiment to compare \alg with several other
approaches\footnote{Our code is publicly available at
  \url{http://github.com/akshaykr/oracle_cb/}.}. We simulate three
different environments that follow the semiparametric contextual
bandits model with $d=10$, $K=2$. In the first setting the reward is
linear and the action features are drawn uniformly from the unit
sphere.  In the latter two settings, we set
$f_t(x_t) = - \max_{a} \langle \theta, z_{t,a}\rangle$, which is
related to \ifthenelse{\equal{\version}{workshop}}{a lower bound
  construction}{the construction in the proof
  of~\pref{prop:det_lb}}. One of these semiparametric settings has
action features sampled from the unit sphere, while for the other, we
sample from the intersection of the unit sphere and the positive
orthant.

In~\pref{fig:exp}, we plot the performance of~\pref{alg:pseudocode}
against four baseline algorithms: (1) \linucb: the optimistic
algorithm for linear stochastic bandits~\citep{abbasi2011improved},
(2) \thompson sampling for linear contextual
bandits~\cite{agrawal2013thompson}, (3) \greedy: the $\epsilon$-greedy
approach~\citep{langford2008epoch} with a linear policy class, (4)
\monster: a more sophisticated agnostic
algorithm~\citep{agarwal2014taming} with linear policy class. The
first algorithm is deterministic, so can have linear regret in our
setting, but is the natural baseline and one we hope to
improve. Thompson Sampling is another natural baseline, and a variant
was used by~\citet{greenewald2017action} in essentially the same
setting as ours. The latter two have $(Kd)^{1/3}T^{2/3}$ and
$\sqrt{KdT}$ regret bounds respectively under our assumptions, but
require solving cost-sensitive classification problems, which are
NP-hard in general. Following prior empirical
evaluations~\citep{krishnamurthy2016contextual}, we use a surrogate
loss formulation based on square loss minimization in the
implementation. 

The results of the experiment are displayed in~\pref{fig:exp},
where we plot the cumulative regret against the number of rounds
$T$. All algorithms have a single parameter that governs the degree of
exploration. In \alg and \linucb, this is the constant $\ellipsoid$ in
the confidence bound, in \thompson it is the variance of the prior, and in \monster and \greedy it is the amount
of uniform exploration performed by the algorithm. For each algorithm
we perform 10 replicates for each of 20 values of the corresponding
parameter, and we plot the best average performance, with error bars
corresponding to $\pm 2$ standard deviations.

In the linear experiment (\pref{fig:exp}, left panel), \alg
performs the worst, but is competitive with the agnostic approaches,
demonstrating a price to pay for robustness. The experimental setup in
the center panel is identical except with confounding, and \alg is
robust to this confounding, with essentially the same performance,
while the three baselines degrade dramatically. Finally, when the
features lie in the positive orthant (right panel), \linucb degrades
further, while \alg remains highly effective.

\ifthenelse{\equal{\version}{workshop}}{}{
Regarding the baselines, we make two remarks: 
\begin{packed_enum}
\item Intuitively, the positive orthant setting is more challenging
  for \linucb since there is less inherent randomness in the
  environment to overcome the confounding effect.
\item The agnostic approaches, despite strong regret guarantees,
  perform somewhat poorly in our experiments, and we believe this for
  three reasons. First, our surrogate-loss implementation is based on
  an implicit realizability assumption, which is not satisfied
  here. Second, we expect that the constant factors in their regret
  bounds are significantly larger than those of \alg or
  \linucb. For computational reasons, we only solve the
  optimization problem in \monster every 50 rounds, which causes a
  further constant factor increase in the regret.
\end{packed_enum}
}

Overall, while \alg is worse than other approaches in the linear
environment, the experiment demonstrates that when the environment is
not perfectly linear, approaches based on realizability assumptions
(either explicitly like in \linucb, or implicitly like in 
implementations of \monster and \greedy), can fail. We emphasize that
linear environments are rare in practice, and such assumptions are
typically impossible to verify. We therefore believe that trading off
a small loss in performance in the specialized linear case for
significantly more robustness, as \alg demonstrates, is desirable.

\ifthenelse{\equal{\version}{workshop}}{}{
\section{Discussion}
This paper studies a generalization of the linear stochastic bandits
setting, where rewards are confounded by an adaptive adversary. Our
new algorithm, \alg, achieves the optimal regret, and also matches (up
to logarithmic factors) the best algorithms for the linear case. Our
empirical evaluation shows that \alg offers significantly more
robustness than prior approaches, and performs well in several
environments.  }


\newpage

\appendix
\onecolumn
\section{Using the OLS Estimator}
\label{app:bias}
Here we construct an example problem to demonstrate how using the
standard OLS estimator can fail in the semiparametric setting. While
not a comprehensive proof against all asymptotically biased
approaches, similar examples can be constructed for related
estimators.

Consider a two-dimensional problem with two actions and no stochastic
noise, where $\theta = e_2$, the second standard basis vector. On the
even rounds, the actions are $z_1 = (1,1), z_2 = (1,1/3)$ and the
confounding term is $f = -1$. On the odd rounds, the actions are
$z_1 = z_2 = (1,0)$ and the confounding term is $f = 1$. For any
policy for selecting actions, the OLS estimator before round $t$ (for
even $t$) is the solution to the following optimization problem:
\begin{align*}
\textrm{minimize}_{w\in\RR^2}\ \alpha(w_1+w_2)^2 + (1-\alpha)(w_1+w_2/3+2/3)^2 + (w_1-1)^2 = L(w)
\end{align*}
where $\alpha \in [0,1]$ corresponds to the fraction of the even
rounds (up to round $t$) where the policy chose $z_1$. We will argue
that, for any $\alpha$, the solution to this problem $\hat{w}$ has
$\hat{w}_2 < 0$. Since there is no stochastic noise, there is no need
for confidence bounds once the covariance is full rank, which happens
after the second round. Together, this implies that any sensible
policy based on $\hat{w}$ will prefer $z_2$ to $z_1$ on the even
rounds, but $z_1$ yields higher reward by a fixed constant. Thus using
OLS in a confidence-based approach leads to linear regret.

We now show that $\hat{w}_2$ is strictly negative. We have
\begin{align*}
\frac{\partial L(w)}{\partial w_1} &= 2\alpha(w_1+w_2) + 2(1-\alpha)(w_1+w_2/3+2/3) + 2(w_1-1),\\
\frac{\partial L(w)}{\partial w_2} &= 2\alpha(w_1+w_2) + \frac{2}{3}(1-\alpha)(w_1+w_2/3+2/3).
\end{align*}
Setting both equations equal to zero yields the following system:
\begin{align*}
4w_1 + (2/3+4\alpha/3)w_2 = 2/3 +4\alpha/3,\qquad (2/3 + 4\alpha/3)w_1 + (2/9 + 16\alpha/9)w_2 = 4\alpha/9 - 4/9.
\end{align*}
The solution to this system is
\begin{align*}
w_1 = \frac{(2\alpha+1)^2}{-4\alpha^2+12\alpha+1},\qquad w_2 = \frac{4\alpha^2+5}{4\alpha^2-12\alpha-1},
\end{align*}
provided that $4\alpha^2 \ne 12\alpha+1$, which is not possible with
$\alpha\in[0,1]$. In the interval $[0,1]$ we have that
$4\alpha^2 - 12\alpha-1 < 0$, and hence $w_2 < 0$. Thus, the OLS
estimator incorrectly predicts that $z_2$ receives higher reward than
$z_1$ on the even rounds. Since confidence intervals are not needed,
the algorithm suffers linear reget.

\section{Proof of~\pref{prop:det_lb}}
\label{app:lower}
We consider two possible values for the true parameter:
$\theta_1 = e_1 \in \RR^2, \theta_2 = e_2 \in \RR^2$. At all rounds,
the context $x_t = \{e_1, e_2\}$ contains just two actions, and we
further assume that the noise term $\xi_t = 0$ almost surely. Since
the action $a_t$ is a deterministic function of the history, it can
also be computed by the adaptive adversary at the beginning of the
round, and the adversary chooses
\begin{align*}
  f_t(x_t) = -\one\{a_t = \argmax_{a} \langle \theta, z_{t,a}\rangle\}.
\end{align*}
We show that $r_t(a_t) = 0$ for all rounds $t$. Assume the parameter
is $\theta_1$ so the optimal action is $a_t^\star = e_1$ and the
suboptimal action $e_2$ has $\langle \theta, e_2\rangle = 0$. If the
learner chooses action $e_2$, then the adversary sets $f_t(x_t) = 0$,
so $r_t(a_t) = 0$. On the other hand, if the learner chooses action
$e_1$, then the adversary sets $f_t(x_t) = -1$ so the reward is also
zero. Similarly, if $\theta = \theta_2$, the observed reward is always
zero. Since the algorithm is deterministic, it behaves identically
regardless of whether the parameter is $\theta_1$ or $\theta_2$. In
one of these instances the algorithm must choose the suboptimal action
at least $T/2$ times, leading to the lower bound.

\section{Proof for the Two-Action Case}
We first focus on the simpler two action case.  Before turning to the
main analysis, we prove two supporting lemmas. The first is an
algebraic inequality relating matrix determinants to traces. This
inequality also appears in~\citet{abbasi2011improved}.
\begin{lemma}
\label{lem:det_to_trace}
Let $X_1,\ldots,X_n$ denote vectors in $\RR^d$ with $\|X_i\|_2\le L$ for
all $i \in [n]$. Define $\Gamma \triangleq \lambda I + \sum_{i=1}^n X_iX_i\tran$. Then
\begin{align*}
  \det(\Gamma) \leq (\lambda + nL^2/d)^d.
\end{align*}
\end{lemma}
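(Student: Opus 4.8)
The plan is to reduce the determinant bound to a statement about eigenvalues and then invoke the AM--GM inequality. Since $\Gamma$ is symmetric positive definite, let $\lambda_1,\dots,\lambda_d > 0$ denote its eigenvalues, so that $\det(\Gamma) = \prod_{j=1}^d \lambda_j$ and $\tr(\Gamma) = \sum_{j=1}^d \lambda_j$. The AM--GM inequality applied to the $\lambda_j$ gives
\begin{align*}
  \det(\Gamma) = \prod_{j=1}^d \lambda_j \le \left(\frac{1}{d}\sum_{j=1}^d \lambda_j\right)^d = \left(\frac{\tr(\Gamma)}{d}\right)^d.
\end{align*}

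Next I would bound the trace. By linearity of trace and the cyclic property, $\tr(X_i X_i\tran) = \|X_i\|_2^2 \le L^2$, so
\begin{align*}
  \tr(\Gamma) = \tr(\lambda I) + \sum_{i=1}^n \tr(X_i X_i\tran) = \lambda d + \sum_{i=1}^n \|X_i\|_2^2 \le \lambda d + n L^2.
\end{align*}
Combining the two displays yields $\det(\Gamma) \le \big((\lambda d + nL^2)/d\big)^d = (\lambda + nL^2/d)^d$, as claimed.

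There is no real obstacle here: the only things being used are that $\Gamma$ is PSD (so its eigenvalues are nonnegative and AM--GM applies), the additivity of the trace, and the norm bound $\|X_i\|_2 \le L$. The one point worth stating carefully is the nonnegativity/positive-definiteness of $\Gamma$ (guaranteed by $\lambda > 0$, which is implicit in the algorithm's setting), so that the eigenvalues are real and nonnegative and AM--GM is legitimate; everything else is a one-line computation.
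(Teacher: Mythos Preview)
Your proof is correct and is essentially identical to the paper's: both apply the AM--GM inequality to the eigenvalues of $\Gamma$ to obtain $\det(\Gamma)^{1/d}\le \tr(\Gamma)/d$, then bound the trace via $\tr(\Gamma)=\lambda d+\sum_i\|X_i\|_2^2\le \lambda d+nL^2$. The paper states the first step as a ``spectral version of the AM--GM inequality'' without writing out the eigenvalues, but the argument is the same.
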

\begin{proof}
  We will apply the following standard argument:
  \begin{align*}
    \det(\Gamma)^{1/d} \leq \frac{1}{d}\tr(\Gamma) = \frac{1}{d}\tr(\lambda I) + \frac{1}{d}\sum_{i=1}^n\tr(X_iX_i\tran) = \lambda + \frac{1}{d} \sum_{i=1}^n \|X_i\|_2^2 \leq \lambda + nL^2/d.
  \end{align*}
  The first step is a spectral version of the AM-GM inequality and the
  remaining steps use linearity of the trace operator and the
  boundedness conditions.
\end{proof}

The second lemma is a new self-normalized concentration inequality for
vector valued martingales.
\begin{lemma}[Symmetric self-normalized inequality]
  \label{lem:self_normalized_symmetric}
  Let $\{\Fcal_t\}_{t=1}^T$ be a filtration and let
  $\{(Z_t,\zeta_t)\}_{t=1}^T$ be a stochastic process with
  $Z_t \in \RR^d$ and $\zeta_t \in \RR$ such that (1) $(Z_t,\zeta_t)$
  is $\Fcal_t$ measurable, (2) $|\zeta_t| \le M$ for all $t \in [T]$,
  (3) $Z_t \independent \zeta_t | \Fcal_t$, (4)
  $\EE[Z_t | \Fcal_t]=0$, and (5) for all $x \in \RR^d$,
  $\Lcal(\langle x,Z_t\rangle \mid \Fcal_t) = \Lcal(-\langle
  x,Z_t\rangle \mid \Fcal_t)$ where $\Lcal$ denotes the probability
  law, so that $Z_t$ is conditionally symmetric. Let
  $\Sigma \triangleq \sum_{t=1}^T Z_t Z_t\tran$. Then for any positive definite
  matrix $Q$ we have
  \begin{align*}
    \PP\left[ \left\|\sum_{t=1}^TZ_t\zeta_t\right\|_{(Q + M^2\Sigma)^{-1}}^2 \ge 2 \log\left(\frac{1}{\delta}\sqrt{\frac{\det(Q+M^2\Sigma)}{\det(Q)}}\right)\right] \le \delta.
  \end{align*}
\end{lemma}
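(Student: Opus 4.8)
The plan is to use the method of mixtures (pseudo-maximization), as in~\citet{delapena2009theory} and~\citet{abbasi2011improved}, but with the exponential tilt controlled through the \emph{conditional symmetry} of $Z_t$ rather than through sub-Gaussianity of a conditionally centered multiplier. Write $S_t \triangleq \sum_{s=1}^t Z_s\zeta_s$ and $\Sigma_t \triangleq \sum_{s=1}^t Z_sZ_s\tran$, so that $S_T = \sum_{t=1}^T Z_t\zeta_t$ and $\Sigma_T = \Sigma$; throughout, $\Fcal_{t-1}$ denotes the history strictly before round $t$, under which $Z_t$ is mean zero, conditionally symmetric, and independent of $\zeta_t$ (with $|\zeta_t|\le M$). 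For a fixed direction $u \in \RR^d$ I would define
\[
  M^u_t \triangleq \exp\!\left(\langle u, S_t\rangle - \tfrac{M^2}{2}\,u\tran \Sigma_t\, u\right), \qquad M^u_0 \triangleq 1,
\]
and first show $(M^u_t)_t$ is a nonnegative $\{\Fcal_t\}$-supermartingale. Since $M^u_t = M^u_{t-1}\exp(\langle u, Z_t\rangle\zeta_t - \tfrac{M^2}{2}\langle u, Z_t\rangle^2)$ with $M^u_{t-1}$ predictable, this reduces to the one-step estimate $\EE[\exp(\langle u, Z_t\rangle\zeta_t - \tfrac{M^2}{2}\langle u, Z_t\rangle^2)\mid \Fcal_{t-1}] \le 1$.

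That one-step estimate is the heart of the argument, and the hard part. Put $d_t \triangleq \langle u, Z_t\rangle\zeta_t$. Conditioning additionally on $\zeta_t$ and using $Z_t \independent \zeta_t \mid \Fcal_{t-1}$, the conditional law of $\langle u, Z_t\rangle$ is unchanged and symmetric, so $d_t$—a symmetric variable scaled by the fixed scalar $\zeta_t$—is symmetric; averaging over $\zeta_t$, $d_t$ is symmetric given $\Fcal_{t-1}$. For any symmetric $d$ one has $\EE[e^{d}e^{-d^2/2}] = \EE[\cosh(d)\,e^{-d^2/2}]$, and the elementary bound $\cosh(x)\le e^{x^2/2}$ gives $\EE[\cosh(d)\,e^{-d^2/2}]\le 1$; hence $\EE[e^{d_t-d_t^2/2}\mid \Fcal_{t-1}]\le 1$. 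Finally $|\zeta_t|\le M$ yields $\tfrac{M^2}{2}\langle u, Z_t\rangle^2 \ge \tfrac12 d_t^2$, so $\exp(\langle u, Z_t\rangle\zeta_t - \tfrac{M^2}{2}\langle u, Z_t\rangle^2)\le \exp(d_t - \tfrac12 d_t^2)$ and the one-step estimate follows; in particular $\EE[M^u_t]\le 1$ for every $t$.

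The remaining steps are the standard mixture machinery. Let $h$ be the density of the centered Gaussian on $\RR^d$ with covariance $Q^{-1}$ and set $\bar M_t \triangleq \int_{\RR^d} M^u_t\, h(u)\, du$; by Tonelli and conditional Fubini, $\bar M_t$ is a nonnegative supermartingale with $\bar M_0 = 1$. Completing the square in the Gaussian integral gives the closed form
\[
  \bar M_t = \sqrt{\frac{\det(Q)}{\det(Q + M^2\Sigma_t)}}\;\exp\!\left(\tfrac12\,\|S_t\|_{(Q + M^2\Sigma_t)^{-1}}^2\right).
\]
Applying Markov's inequality to $\bar M_T$ (using $\EE[\bar M_T]\le \EE[\bar M_0]=1$) gives $\PP[\bar M_T \ge 1/\delta]\le \delta$, and rewriting $\{\bar M_T \ge 1/\delta\}$ by taking logarithms is exactly the claimed inequality with $\Sigma = \Sigma_T$; replacing Markov by Ville's maximal inequality for $\bar M_t$ yields the same bound simultaneously for all $t\in[T]$, which is the form used in~\pref{lem:estimator}. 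The one genuinely new ingredient, and the only real obstacle, is the one-step bound of the second paragraph: because $\zeta_t$ need not be conditionally centered, all concentration must be extracted from $Z_t$ itself, and it is precisely conditional symmetry together with $Z_t \independent \zeta_t$ that makes the $\cosh$ bound applicable, after which everything is routine computation.
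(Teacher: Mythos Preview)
Your proposal is correct and follows essentially the same route as the paper: both build the exponential supermartingale $\exp(\langle u,S_t\rangle - \tfrac{M^2}{2}u\tran\Sigma_t u)$ via the conditional symmetry of $Z_t$ and the inequality $\cosh(x)\le e^{x^2/2}$, then integrate against a $\Ncal(0,Q^{-1})$ prior, complete the square, and apply Markov. The only cosmetic difference is that the paper conditions explicitly on $\zeta_t$ before invoking symmetry of $\langle u,Z_t\rangle$, whereas you first argue that $d_t=\langle u,Z_t\rangle\zeta_t$ is itself conditionally symmetric and then apply the $\cosh$ bound to $e^{d_t-d_t^2/2}$; these are the same calculation in a different order.
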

\begin{proof}
  The proof follows the recipe in~\citet{delapena2009theory} (See
  also~\citet{delapena2008self} for a more comprehensive treatment
  including the univariate case). We start by applying the Chernoff
  method. Let $\bar{\Sigma} \triangleq Q + M^2\Sigma$. We can write
  \begin{align*}
    \PP\left[\left\|\sum_{t=1}^T Z_t \zeta_t\right\|_{\bar{\Sigma}^{-1}}^2 \ge 2 \log\left(\frac{1}{\delta}\sqrt{\frac{\det(\bar{\Sigma})}{\det(Q)}}\right)\right]
    & = \PP\left[\exp\left(\frac{1}{2}\left\|\sum_{t=1}^T Z_t\zeta_t\right\|_{\bar{\Sigma}^{-1}}^2\right) \ge \frac{1}{\delta}\sqrt{\frac{\det(\bar{\Sigma})}{\det(Q)}} \right]\\
    & \leq \delta \EE\left[\sqrt{\frac{\det(Q)}{\det(\bar{\Sigma})}} \exp\left(\frac{1}{2}\left\|\sum_{t=1}^T Z_t\zeta_t\right\|_{\bar{\Sigma}^{-1}}^2\right)\right].
  \end{align*}
  Therefore, if we prove that this latter expectation is at most one,
  we will arrive at the conclusion. A similar statement appears in
  Theorem 1 of~\citet{delapena2009theory}, but our process is slightly
  different due to the presence of $\zeta_t$.  To bound this latter
  expectation, fix some $\lambda \in \RR^d$ and consider an 
  exponentiated process with the increments
  \begin{align*}
    D_t^\lambda \triangleq \exp\left(\langle \lambda, Z_t\zeta_t\rangle - \frac{M^2 \langle \lambda, Z_t\rangle^2}{2}\right).
  \end{align*}
  Observe that $\EE [D_t^\lambda | \Fcal_t] \leq 1$ since by the conditional symmetry of $Z_t$, we have
  \begin{align*}
    \EE [D_t^\lambda | \Fcal_t] &= \EE \left[\EE\left[ D_t^\lambda  \mid \Fcal_t, \zeta_t\right] \mid \Fcal_t\right]\\
    & = \EE\left[ \EE\left[\exp\left(\frac{-M^2\langle \lambda, Z_t\rangle^2}{2}\right) \times \frac{1}{2}\left(\exp(\langle \lambda, Z_t\zeta_t\rangle) + \exp(- \langle \lambda, Z_t\zeta_t\rangle \right) \mid \Fcal_{t}, \zeta_t\right]\mid \Fcal_t\right]\\
    & = \EE\left[\EE\left[\exp\left(\frac{-M^2\langle \lambda, Z_t\rangle^2}{2}\right) \times \cosh(\langle \lambda, Z_t\zeta_t) \mid \Fcal_t, \zeta_t\right] \mid \Fcal_t \right]\\
    & \leq \EE\left[\EE\left[\exp\left(\frac{-M^2\langle \lambda, Z_t\rangle^2}{2} + \frac{\langle \lambda, Z_t\zeta_t\rangle^2}{2}\right)\mid \Fcal_t, \zeta_t\right] \mid \Fcal_t\right] \leq 1.
  \end{align*}
  This argument first uses the conditional symmetry of $Z_t$ and the
  conditional independence of $Z_t,\zeta_t$, then the identity
  $(e^x + e^{-x})/2 = \cosh(x)$ and finally the analytical inequality
  $\cosh(x) \leq e^{x^2/2}$. Finally in the last step we use the bound
  $|\zeta_t| \leq M$. This implies that the martingale
  $U_t^\lambda \triangleq \prod_{\tau=1}^t D_\tau^\lambda$ is a super-martingale
  with $\EE[ U_t^\lambda ]\leq 1$ for all $t$, since by induction
  \begin{align}
    \EE[U_t^\lambda] = \EE[U_{t-1}^\lambda \EE[D_t^\lambda | \Fcal_t]]\leq \EE[U_{t-1}^\lambda]\leq \ldots \leq 1.\label{eq:two_arm_exponentiated}
  \end{align}
  Now we apply the method of mixtures. In a standard application of
  the Chernoff method, we would choose $\lambda$ to maximize
  $\EE[U_T^\lambda]$, but since we still have an expectation, we
  cannot swap expectation and maximum. Instead, we integrate the
  inequality $\EE[U_T^\lambda] \leq 1$, which holds for any $\lambda$,
  against $\lambda$ drawn from a Gaussian distribution with covariance
  $Q^{-1}$. By Fubini's theorem, we can swap the expectations to
  obtain
  \begin{align*}
    1 &\ge \EE_{\lambda \sim \Ncal(0,Q^{-1})} \EE[U_T^\lambda] = \EE \int U_T^\lambda (2\pi)^{-d/2}\sqrt{\det(Q)} \exp(-\lambda\tran Q\lambda /2)d\lambda\\
    & = \EE \int (2\pi)^{-d/2}\sqrt{\det(Q)}\exp\left(\sum_{t=1}^T\langle \lambda, Z_t\zeta_t\rangle - \frac{M^2 \lambda\tran(\sum_{t=1}^TZ_tZ_t\tran)\lambda + \lambda\tran Q\lambda}{2}\right)d\lambda\\
    & = \EE \int (2\pi)^{-d/2}\sqrt{\det(Q)}\exp\left(\langle \lambda, S\rangle - \frac{M^2 \lambda\tran\Sigma\lambda + \lambda\tran Q\lambda}{2}\right)d\lambda,
  \end{align*}
  where $S \triangleq \sum_{t=1}^TZ_t\zeta_t$ and recall that $\Sigma \triangleq \sum_{t=1}^TZ_tZ_t\tran$. By completing the square, the term in the exponent can be rewritten as
  \begin{align*}
    \langle \lambda, S\rangle - \frac{M^2\lambda\tran\Sigma\lambda + \lambda\tran Q\lambda}{2} = \frac{1}{2}\left( -(\lambda - \bar{\Sigma}^{-1}S)\tran \bar{\Sigma}(\lambda - \bar{\Sigma}^{-1}S) + S\tran \bar{\Sigma}^{-1}S\right),
  \end{align*}
  where recall that $\bar{\Sigma} \triangleq M^2\Sigma + Q$. As such we obtain
  \begin{align*}
    1 & \geq \EE\left[\exp\left( S\tran \bar{\Sigma}^{-1}S/2\right) \times \int (2\pi)^{-d/2}\sqrt{\det(Q)} \exp\left(\frac{ - (\lambda - \bar{\Sigma}^{-1}S)\tran \bar{\Sigma}(\lambda - \bar{\Sigma}^{-1}S)}{2}\right)\right]d\lambda \\
    & = \EE\sqrt{\frac{\det(Q)}{\det(\bar{\Sigma})}} \exp\left(S\tran \bar{\Sigma}^{-1}S\right).
  \end{align*}
This proves the lemma.
\end{proof}

Equipped with the two lemmas, we can now turn to the analysis of the
influence-adjusted estimator.
\begin{lemma}[Restatement of~\pref{lem:estimator}]
  Under~\pref{as:env} and~\pref{as:bd}, with probability at
  least $1-\delta$, the following holds simultaneously for all
  $t \in [T]$:
  \begin{align*}
    \|\hat{\theta}_t - \theta\|_{\Gamma_t} \leq \sqrt{\lambda} + \sqrt{9 d \log(1+ T/(d\lambda)) +
    18\log(T/\delta)}.
  \end{align*}
\end{lemma}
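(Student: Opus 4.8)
The plan is to turn the three-line argument of the main-text sketch into a rigorous proof, leaning on the two supporting lemmas just established. First I would substitute the reward model $r_\tau = \langle\theta, z_{\tau,a_\tau}\rangle + f_\tau(x_\tau) + \xi_\tau$ into the estimator $\hat\theta_t = \Gamma_t^{-1}\sum_{\tau=1}^{t-1}(z_{\tau,a_\tau}-\mu_\tau)r_\tau$ and split $\langle\theta, z_{\tau,a_\tau}\rangle = \langle\theta,\mu_\tau\rangle + \langle\theta, Z_\tau\rangle$ with $Z_\tau \triangleq z_{\tau,a_\tau}-\mu_\tau$. The pieces $Z_\tau\langle\theta,Z_\tau\rangle = Z_\tau Z_\tau\tran\theta$ reassemble into $(\Gamma_t-\lambda I)\theta$, while the scalar $\zeta_\tau \triangleq \langle\theta,\mu_\tau\rangle + f_\tau(x_\tau) + \xi_\tau$ --- which, crucially, is \emph{not} required to be conditionally mean zero --- multiplies $Z_\tau$ to form $S_t \triangleq \sum_{\tau=1}^{t-1}Z_\tau\zeta_\tau$. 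This gives $\hat\theta_t - \theta = -\lambda\Gamma_t^{-1}\theta + \Gamma_t^{-1}S_t$, so the triangle inequality in the $\|\cdot\|_{\Gamma_t}$ norm yields
\[
\|\hat\theta_t-\theta\|_{\Gamma_t} \le \lambda\|\theta\|_{\Gamma_t^{-1}} + \|S_t\|_{\Gamma_t^{-1}} \le \sqrt{\lambda} + \|S_t\|_{\Gamma_t^{-1}},
\]
using $\Gamma_t\succeq\lambda I$ and $\|\theta\|_2\le 1$. It remains to control $\|S_t\|_{\Gamma_t^{-1}}$.

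For the second step I would apply \pref{lem:self_normalized_symmetric} to the rescaled process $(Z_\tau,\zeta_\tau/3)$ with $M = 1$ and $Q = \lambda I$, taking the filtration $\Fcal_\tau$ to be the history through the round-$\tau$ context $x_\tau$ and confounder $f_\tau$, so that $a_\tau$ (and hence $Z_\tau$) is not yet revealed. The hypotheses then hold: $\EE[Z_\tau \mid \Fcal_\tau] = \EE_{a\sim\pi_\tau}z_{\tau,a}-\mu_\tau = 0$; $Z_\tau\independent\zeta_\tau\mid\Fcal_\tau$ since the only part of $\zeta_\tau$ not $\Fcal_\tau$-measurable is $\xi_\tau$, which is independent of $a_\tau$ by \pref{as:env}; $|\zeta_\tau/3|\le 1$ by \pref{as:bd}; and --- the one ingredient special to $K=2$ --- conditionally on $\Fcal_\tau$ the vector $Z_\tau$ takes the values $\pm\tfrac12(z_{\tau,a}-z_{\tau,b})$ with equal probability (or is $0$ when $|\Acal_\tau|=1$), hence is conditionally symmetric, and moreover $\|Z_\tau\|_2\le 1$. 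Since $Q + M^2\sum_{\tau<t}Z_\tau Z_\tau\tran = \lambda I + \sum_{\tau<t}Z_\tau Z_\tau\tran = \Gamma_t$ exactly and $\sum_{\tau<t}Z_\tau(\zeta_\tau/3) = S_t/3$, the lemma gives, with probability at least $1-\delta$,
\[
\tfrac19\|S_t\|_{\Gamma_t^{-1}}^2 \le 2\log(1/\delta) + \log(\det(\Gamma_t)/\det(\lambda I)).
\]
By \pref{lem:det_to_trace} with $L=1$ (valid because with two arms $\mu_\tau$ is the midpoint of the two feature vectors), $\det(\Gamma_t)/\det(\lambda I)\le(1+T/(d\lambda))^d$, so $\|S_t\|_{\Gamma_t^{-1}}^2 \le 18\log(1/\delta) + 9d\log(1+T/(d\lambda))$. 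A union bound over $t\in[T]$ (replacing $\delta$ by $\delta/T$) makes this hold simultaneously for all $t$ with $18\log(T/\delta)$ in place of $18\log(1/\delta)$ --- alternatively, the nonnegative supermartingale built inside the proof of \pref{lem:self_normalized_symmetric} already delivers a time-uniform bound via Ville's inequality. Plugging back into the display above gives $\|\hat\theta_t-\theta\|_{\Gamma_t} \le \sqrt{\lambda} + \sqrt{9d\log(1+T/(d\lambda)) + 18\log(T/\delta)} = \ellipsoid$ for all $t$, as claimed.

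The substantive difficulty is not in this reduction but in \pref{lem:self_normalized_symmetric} itself (proved above): because Neyman orthogonalization leaves $\zeta_\tau$ \emph{biased} (it carries the confounder $f_\tau(x_\tau)$) with no sub-Gaussian control, the standard self-normalized martingale bound used for linear bandits~\citep{abbasi2011improved} does not apply, and one must instead exploit the conditional \emph{symmetry} of the centered features $Z_\tau$. The two-arm structure supplies exactly this symmetry, which is what makes the $K=2$ case clean; it genuinely fails for $K\ge 3$, which is why the general-action proof must fall back on a weaker inequality involving both the empirical and the population covariance. Given that lemma, the only care needed here is the rescaling that aligns the martingale normalizer $Q + M^2\Sigma_t$ with the estimator's own covariance $\Gamma_t$, together with the routine boundedness bookkeeping.
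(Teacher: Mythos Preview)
Your proposal is correct and follows essentially the same route as the paper: the same decomposition $\hat\theta_t-\theta=-\lambda\Gamma_t^{-1}\theta+\Gamma_t^{-1}S_t$, the same verification of the symmetry hypothesis via the two-arm uniform randomization, the same appeal to \pref{lem:self_normalized_symmetric} and \pref{lem:det_to_trace}, and the same union bound over $t$. The only cosmetic difference is that you absorb the range bound $|\zeta_\tau|\le 3$ by rescaling $\zeta_\tau\mapsto\zeta_\tau/3$ and taking $M=1,\ Q=\lambda I$, whereas the paper keeps $M=3$ and sets $Q=M^2\lambda I$; both choices make the normalizer coincide with $\Gamma_t$ and yield the identical constant $9$ in front of the log-determinant term.
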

\begin{proof}
  Recall that we define $\hat{\theta}_t,\Gamma_t$ to be the estimator
  and matrix used in round $t$, based on $t-1$ examples. Fixing a
  round $t$, we start by expanding the definition of
  $\hat{\theta}_t$. We use the shorthand $z_\tau \triangleq z_{\tau,a_\tau}$,
  $\mu_\tau \triangleq \EE_{b \sim \pi_\tau} \left[ z_{\tau,b} \right]$, and
  $r_\tau \triangleq r_\tau(a_\tau)$.
  \begin{align*}
    \hat{\theta}_t & = \Gamma_t^{-1}\sum_{\tau=1}^{t-1}(z_\tau - \mu_\tau)r_\tau = \Gamma_t^{-1} \sum_{\tau=1}^{t-1} (z_\tau - \mu_\tau) (\langle \theta, z_\tau\rangle + f_\tau(x_\tau) + \xi_\tau)\\
    & = \Gamma_t^{-1} \sum_{\tau=1}^{t-1} (z_\tau - \mu_\tau) (\langle \theta, z_\tau -\mu_\tau\rangle + \langle \theta, \mu_\tau\rangle + f_\tau(x_\tau) + \xi_\tau)\\
    & = (\Gamma_t)^{-1} (\Gamma_t - \lambda I)\theta + \Gamma_t^{-1}\sum_{\tau=1}^{t-1} (z_\tau - \mu_\tau)(\langle \theta,\mu_\tau\rangle + f_\tau(x_\tau) + \xi_\tau).
  \end{align*}
  Let $Z_\tau \triangleq z_\tau - \mu_\tau$ and
  $\zeta_\tau \triangleq \langle \theta,\mu_\tau\rangle + f_\tau(x_\tau) + \xi_\tau$. With this expansion, we can write
  \begin{align*}
    \|\hat{\theta}_t - \theta\|_{\Gamma_t} &= \|-\lambda \Gamma_t^{-1}\theta + \Gamma_t^{-1}\sum_{\tau=1}^{t-1}Z_\tau\zeta_\tau\|_{\Gamma_t}
                                            \leq \|\lambda \theta\|_{\Gamma_t^{-1}} + \left\|\sum_{\tau=1}^{t-1}Z_\tau\zeta_\tau\right\|_{\Gamma_t^{-1}}
                                            \leq \sqrt{\lambda} + \left\|\sum_{\tau=1}^{t-1}Z_\tau\zeta_\tau\right\|_{\Gamma_t^{-1}}.
  \end{align*}
  To finish the proof, we apply~\pref{lem:self_normalized_symmetric}
  to this last term. To verify the preconditions of the lemma, let
  $\Fcal_\tau \triangleq
  \sigma(x_1,\ldots,x_\tau,a_1,\ldots,a_{\tau-1},\xi_1,\ldots,\xi_{\tau-1})$
  denote the $\sigma$-algebra corresponding to the $\tau^\textrm{th}$
  round, after observing the context $x_\tau$. Then the policy
  $\pi_\tau$ and hence the action $a_\tau$ are $\Fcal_\tau$ measurable
  and so is the noise term $\xi_\tau$. Therefore,
  $Z_\tau = z_{\tau,a_\tau} - \EE_{a \sim
    \pi_\tau}\left[z_{\tau,a}\right]$ is measurable, which verifies
  the first precondition. Using the boundedness properties
  in~\pref{as:bd}, we know that $|\zeta_\tau| \leq 3 \triangleq M$,
  and by construction of the random variables, we have
  $Z_\tau \independent \zeta_\tau | \Fcal_{\tau}$ and
  $\EE \left[Z_\tau | \Fcal_\tau\right] = 0$. Finally, for the
  symmetry property, either $Z_\tau|\Fcal_\tau \equiv 0$ if one action
  is eliminated, or otherwise we have
  $\mu_\tau = \frac{1}{2}(z_{\tau,1} + z_{\tau,2})$ since there are
  only two actions. In this case the random variable
  $Z_\tau |\Fcal_\tau = \epsilon_\tau(z_{\tau,1} - z_{\tau_2})/2$
  where $\epsilon_\tau$ is a Rademacher random variable. By inspection
  this is clearly conditionally symmetric. As such, we may
  apply~\pref{lem:self_normalized_symmetric}, which reveals that with
  probability at least $1-\delta$,
  \begin{align*}
   \left \|\sum_{\tau=1}^{t-1}Z_\tau \zeta_\tau\right\|_{\Gamma_t^{-1}}^2 &= M^2\left\|\sum_{\tau=1}^{t-1} Z_\tau\zeta_\tau\right\|_{(M^2\Gamma_t)^{-1}}^2 \leq 2M^2\log\left(\frac{1}{\delta}\sqrt{\frac{\det(M^2\Gamma_t)}{\det(M^2\lambda I)}}\right)\\
    &= 18\log\left(\sqrt{\lambda^{-d}\det(\Gamma_t)}/\delta\right).
  \end{align*}
  The inequality here is~\pref{lem:self_normalized_symmetric}
  with $Q = M^2\lambda I$, and for the last equality we use that
  $\det(cQ) = c^d\det(Q)$ for a $d \times d$ positive semidefinite
  matrix $Q$. As two final steps, we apply~\pref{lem:det_to_trace} and take a union bound over all rounds
  $T$. Combining these, we get that for all $T$,
  \begin{align*}
    \|\hat{\theta}_t - \theta\|_{\Gamma_t} & \leq \sqrt{\lambda} + \left\|\sum_{\tau=1}^{t-1}Z_\tau\zeta_\tau\right\|_{\Gamma_t^{-1}} \leq  \sqrt{\lambda} + \sqrt{18 \left(\log(\sqrt{\lambda^{-d}\det(\Gamma_t)}) + \log(T/\delta)\right)}\\
    & \leq \sqrt{\lambda} + \sqrt{9 d \log(1+ T/(d\lambda)) + 18\log(T/\delta)}. \tag*\qedhere
  \end{align*}
\end{proof}

Therefore, with
$\ellipsoid \triangleq \sqrt{\lambda} + \sqrt{9d\log(1+T/(d\lambda)) +
  18\log(T/\delta)}$ we can apply~\pref{lem:selection_two_arm} to bound the regret by
\begin{align*}
  \reg(T) \leq \sqrt{2T\log(1/\delta)} + 2\ellipsoid \sum_{t=1}^T \sqrt{\tr(\Gamma_t^{-1} \Cov_{b \sim \pi_t}(z_{t,b}))}.
\end{align*}
Via a union bound, this inequality holds with probability at least
$1-2\delta$. To finish the proof we need to analyze this latter
term. This is the contents of the following lemma. A related
statement, with a similar proof, appears
in~\citet{abbasi2011improved}.
\begin{lemma}
  Let $X_1,\ldots,X_T$ be a sequence of vectors in $\RR^d$ with
  $\|X_t\|_2 \le 1$ and define $\Gamma_1 \triangleq \lambda I$,
  $\Gamma_t \triangleq \Gamma_{t-1} + X_{t-1}X_{t-1}\tran $. Then
  \begin{align*}
    \sum_{t=1}^T \sqrt{\tr(\Gamma_t^{-1} X_tX_t\tran )} \le \sqrt{Td (1+1/\lambda)\log(1+T/(d\lambda))}.
  \end{align*}
\end{lemma}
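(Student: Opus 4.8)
The plan is to run the standard elliptical (log-determinant) potential argument. First I would strip off the square root by Cauchy--Schwarz: since $\tr(\Gamma_t^{-1}X_tX_t\tran) = X_t\tran\Gamma_t^{-1}X_t = \|X_t\|_{\Gamma_t^{-1}}^2$, we have
\[
\sum_{t=1}^T\sqrt{\tr(\Gamma_t^{-1}X_tX_t\tran)} \;\le\; \sqrt{\,T\sum_{t=1}^T\|X_t\|_{\Gamma_t^{-1}}^2\,},
\]
so it suffices to prove that $\sum_{t=1}^T\|X_t\|_{\Gamma_t^{-1}}^2 \le d(1+1/\lambda)\log(1+T/(d\lambda))$, after which taking the square root gives exactly the claimed bound.

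Second, I would pass from each quadratic form to a logarithm. Because $\Gamma_t \succeq \Gamma_1 = \lambda I$ and $\|X_t\|_2 \le 1$, every term obeys $\|X_t\|_{\Gamma_t^{-1}}^2 \le 1/\lambda$. I would then invoke the elementary inequality $u \le (1+1/\lambda)\log(1+u)$ valid for all $u \in [0,1/\lambda]$: this follows because $u \mapsto u/\log(1+u)$ is increasing, so on $[0,1/\lambda]$ it is bounded by its value $\frac{1/\lambda}{\log(1+1/\lambda)}$ at the endpoint, which is in turn at most $1+1/\lambda$ since $\log(1+v) \ge v/(1+v)$ at $v = 1/\lambda$. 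Hence $\|X_t\|_{\Gamma_t^{-1}}^2 \le (1+1/\lambda)\log(1+\|X_t\|_{\Gamma_t^{-1}}^2)$ for every $t$.

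Third, I would telescope. The matrix determinant lemma gives $\det(\Gamma_{t+1}) = \det(\Gamma_t + X_tX_t\tran) = \det(\Gamma_t)\bigl(1+\|X_t\|_{\Gamma_t^{-1}}^2\bigr)$, so $\sum_{t=1}^T\log(1+\|X_t\|_{\Gamma_t^{-1}}^2) = \log\bigl(\det(\Gamma_{T+1})/\det(\Gamma_1)\bigr)$. Applying~\pref{lem:det_to_trace} with $L = 1$ and $n = T$ yields $\det(\Gamma_{T+1}) \le (\lambda + T/d)^d$, while $\det(\Gamma_1) = \lambda^d$, so the sum of logs is at most $d\log(1+T/(d\lambda))$. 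Combining the three steps gives $\sum_{t=1}^T\|X_t\|_{\Gamma_t^{-1}}^2 \le d(1+1/\lambda)\log(1+T/(d\lambda))$ and hence the lemma. The argument is entirely routine; the only place to be slightly careful is the constant $(1+1/\lambda)$ in step two (a mild over-estimate of the sharper $\frac{1/\lambda}{\log(1+1/\lambda)}$), and the bookkeeping in applying~\pref{lem:det_to_trace}, but these present no real obstacle. This is the textbook potential argument used, e.g., by~\citet{abbasi2011improved}.
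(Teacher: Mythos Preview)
Your proof is correct and follows essentially the same elliptical potential argument as the paper: Cauchy--Schwarz, then log-determinant telescoping via~\pref{lem:det_to_trace}. The only cosmetic difference is in how the $(1+1/\lambda)$ factor appears: the paper first bounds $\tr(\Gamma_{t+1}^{-1}X_tX_t\tran)$ via concavity of $\log\det$ and then uses Sherman--Morrison to pass back to $\Gamma_t^{-1}$, whereas you use the matrix determinant lemma to get an exact telescoping identity and then the scalar inequality $u \le (1+1/\lambda)\log(1+u)$ on $[0,1/\lambda]$; both routes are standard and equivalent.
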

\begin{proof}
  First, apply the Cauchy-Schwarz inequality to the left hand side to obtain
  \begin{align*}
    \sum_{t=1}^T \sqrt{\tr(\Gamma_t^{-1}X_tX_t\tran )} \leq \sqrt{T} \sqrt{\sum_{T=1}^T \tr(\Gamma_t^{-1} X_t X_t\tran )}.
  \end{align*}
  For the remainder of the proof we work only with the second
  term. Let us start by analyzing a slightly different quantity,
  $\tr(\Gamma_{t+1}^{-1} X_tX_t\tran )$. By concavity of $\log \det(M)$, we have
  \begin{align*}
    \log \det (\Gamma_t) \leq \log \det (\Gamma_{t+1}) + \tr(\Gamma_{t+1}^{-1}(\Gamma_t - \Gamma_{t+1})),
  \end{align*}
  which implies
  \begin{align*}
    \tr(\Gamma_{t+1}^{-1} X_tX_t\tran ) = \tr(\Gamma_{t+1}^{-1}(\Gamma_{t+1} - \Gamma_t)) \leq \log \det (\Gamma_{t+1}) - \log \det(\Gamma_t)
  \end{align*}
  As such, we obtain a telescoping sum
  \begin{align*}
    \sum_{t=1}^T \tr(\Gamma_{t+1}^{-1} X_tX_t\tran ) \leq \log \det(\Gamma_{T+1}) - \log \det( \Gamma_1) \leq d \log (\lambda + T/d) - d \log \lambda = d \log(1 +  T/(d\lambda))
  \end{align*}
  The first inequality here uses the concavity argument and the second
  uses~\pref{lem:det_to_trace}. To finish the proof, we must translate
  back to $\Gamma_t^{-1}$. For this, we use the
  Sherman-Morrison-Woodbury identity, which reveals that
  \begin{align*}
    X_t\tran \Gamma_{t+1}^{-1}X_t &= X_t\tran  (\Gamma_t + X_tX_t\tran )^{-1} X_t = X_t\tran \left(\Gamma_t^{-1} - \frac{\Gamma_t^{-1}X_tX_t\tran \Gamma_t^{-1}}{1 + \|X_t\|_{\Gamma_t^{-1}}^2}\right)X_t\\
    &= \frac{\|X_t\|_{\Gamma_t^{-1}}^2}{1 + \|X_t\|_{\Gamma_t^{-1}}^2} \ge (1+1/\lambda)^{-1} \|X_t\|_{\Gamma_t^{-1}}^2.
  \end{align*}
  Here in the last step we use that
  $\|X_t\|_{\Gamma_t^{-1}}^2 \leq \|X_t\|_{(\lambda I)^{-1}}^2 \leq
  1/\lambda$. Overall, we obtain
  \begin{align*}
    \sum_{t=1}^T\tr(\Gamma_{t}^{-1}X_tX_t\tran ) \leq (1+1/\lambda) d \log(1+T/(d\lambda)),
  \end{align*}
  and combined with the first application of Cauchy-Schwarz, this proves the lemma. 
\end{proof}

Combining the lemmas, we have that with probability at least
$1-2\delta$, the regret is at most
\begin{align*}
& \reg(T)\leq \sqrt{2T\log(1/\delta)} + 2\ellipsoid \sqrt{Td(1+1/\lambda) \log(1+T/(d\lambda))}\\
     & = \sqrt{2T\log(1/\delta)} + 2\sqrt{Td(1+1/\lambda)\log(1+T/(d\lambda))}\left(\sqrt{\lambda} + \sqrt{9d\log(1+T/(d\lambda)) + 18\log(T/\delta)}\right).
\end{align*}
With $\lambda = 1$, this bound is
$\order\left(\sqrt{Td\log(T/\delta)\log(T/d)} +
  d\sqrt{T}\log(T/d)\right)$.

\section{Proof for the General Case}
We now turn to the more general case. We need several additional lemmas.

\begin{lemma}[Restatement of~\pref{lem:duality}]
  Problem~\pref{eq:optimize} is convex and always has a feasible
  solution. Specifically, for any vectors $z_1,\ldots, z_n \in \RR^d$
  and any positive definite matrix $M$, there exists a distribution
  $w \in \Delta([n])$ with mean $\mu_w = \EE_{b \sim w}[z_b]$ such
  that
  \begin{align*}
    \forall i \in [n], \quad \|z_i - \mu_w\|_{M}^2 \leq \tr(M\Cov_{b \sim w}(z_b)).
  \end{align*}
\end{lemma}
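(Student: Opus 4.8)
The plan is to reformulate the feasibility question as a minimax/saddle-point problem and apply convex duality. Convexity of the constraint set in $w$ is almost immediate: the map $w \mapsto \Cov_{b\sim w}(z_b) = \sum_b w_b z_b z_b^\top - \mu_w \mu_w^\top$ has entries that are concave in $w$ (the $\mu_w\mu_w^\top$ term is a rank-one quadratic subtracted off), so $\tr(M\,\Cov_{b\sim w}(z_b))$ is concave in $w$; meanwhile $\|z_i - \mu_w\|_M^2$ is convex in $w$ since $\mu_w$ is linear in $w$. Hence each constraint $\|z_i-\mu_w\|_M^2 \le \tr(M\,\Cov_{b\sim w}(z_b))$ carves out a convex region, and the feasible set is an intersection of finitely many such regions.

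For existence, consider the function $g(w) \triangleq \max_{i\in[n]} \|z_i-\mu_w\|_M^2 - \tr(M\,\Cov_{b\sim w}(z_b))$, which is convex in $w$ over the simplex $\Delta([n])$; the goal is to show $\min_{w\in\Delta([n])} g(w) \le 0$. I would write $g(w) = \max_{q\in\Delta([n])} \left[ \sum_i q_i \|z_i - \mu_w\|_M^2 \right] - \tr(M\,\Cov_{b\sim w}(z_b))$ and view $\min_w \max_q$ of the bracketed expression minus the covariance term. The key algebraic identity is that for any distribution $q$, $\sum_i q_i \|z_i - \mu_w\|_M^2 = \tr\!\big(M\,\EE_{i\sim q}[(z_i-\mu_w)(z_i-\mu_w)^\top]\big)$, and by the bias-variance decomposition this equals $\tr(M\,\Cov_{b\sim q}(z_b)) + \|\mu_q - \mu_w\|_M^2$. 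So the inner objective becomes $\tr(M\,\Cov_{b\sim q}(z_b)) + \|\mu_q-\mu_w\|_M^2 - \tr(M\,\Cov_{b\sim w}(z_b))$.

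Now the diagonal choice $w = q$ makes the cross term vanish and the two covariance traces cancel, giving value $0$; the point is to justify swapping $\min_w\max_q$ to $\max_q\min_w$ so that evaluating along the diagonal is legitimate. Since the bracketed expression is linear (hence concave) in $q$ on the compact simplex and convex in $w$ on the compact simplex, Sion's minimax theorem applies, yielding $\min_w \max_q = \max_q \min_w$. For fixed $q$, $\min_w \big[\|\mu_q - \mu_w\|_M^2\big] - \tr(M\,\Cov_{b\sim w}(z_b)) + \tr(M\,\Cov_{b\sim q}(z_b))$: taking $w=q$ shows this is $\le 0$ for every $q$, hence $\max_q\min_w \le 0$, hence $\min_w g(w)\le 0$. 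A minimizing $w^\star$ (which exists by compactness and continuity) is the desired feasible distribution.

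The main obstacle is getting the minimax swap cleanly: one must check the regularity hypotheses of Sion's theorem (quasi-concavity in $q$, quasi-convexity in $w$, and compactness of at least one of the domains — all of which hold here since both domains are simplices and the dependence is bilinear-plus-quadratic), and one must be careful that the subtracted term $-\tr(M\,\Cov_{b\sim w}(z_b))$, though concave as a function of $w$ when it appears with a plus sign inside the $\max_q$ objective, combines correctly — here it actually helps, since $-\tr(M\,\Cov_{b\sim w}(z_b)) = -\tr(M\sum_b w_b z_bz_b^\top) + \|\mu_w\|_M^2$ is convex in $w$, so the full inner objective is genuinely convex in $w$ as required. Once the swap is justified, the diagonal evaluation finishes the argument immediately.
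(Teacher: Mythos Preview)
Your proposal is correct and follows essentially the same route as the paper: formulate $\min_{w}\max_{i}\|z_i-\mu_w\|_M^2-\tr(M\Cov_{b\sim w}(z_b))$, linearize the inner $\max$ over $q\in\Delta([n])$, verify convexity in $w$ and linearity in $q$, swap via Sion's minimax theorem, and then evaluate on the diagonal $w=q$ to obtain value $\le 0$. Your use of the bias--variance identity $\sum_i q_i\|z_i-\mu_w\|_M^2=\tr(M\Cov_{b\sim q}(z_b))+\|\mu_q-\mu_w\|_M^2$ is a slightly cleaner way to see why the diagonal gives exactly zero, but it is the same argument as the paper's direct expansion.
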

\begin{proof}
  We analyze the minimax program
  \begin{align*}
    \min_{w \in \Delta([n])} \max_{i \in [n]} \|z_i - \mu_w\|_M^2 - \tr(M\Cov_w(z)).
  \end{align*}
  The goal is to show that the value of this program is non-negative,
  which will prove the result. Expanding the definitions, we have
  \begin{align*}
    & \min_{w \in \Delta([n])} \max_{i \in [n]} \|z_i - \mu_w\|_M^2 - \tr(M\Cov_w(z))\\
    & = \min_{w \in \Delta([n])} \max_{v \in \Delta([n])} \sum_{i} v_i \left(\|z_i - \mu_w\|_M^2 +\mu_w\tran M\mu_w - \sum_j w_j z_j\tran Mz_j\right)\\
    & = \min_{v \in \Delta([n])} \max_{w \in \Delta([n])} \sum_{i} v_i \left(\|z_i - \mu_w\|_M^2 +\mu_w\tran M\mu_w - \sum_j w_j z_j\tran Mz_j\right).
  \end{align*}
  The last equivalence here is Sion's Minimax
  Theorem~\citep{sion1958general}, which is justified since both
  domains are compact convex subsets of $\RR^n$ and since the
  objective is linear in the maximizing variable $v$, and convex in
  the minimizing variable $w$. This convexity is clear since $\mu_w$
  is a linear in $w$, and hence the first two terms are convex
  quadratics (since $M$ is positive definite), while the third term is
  linear in $w$. Thus Sion's theorem lets us swap the order of the
  minimization and maximization.

  Now we upper bound the solution by setting $w = v$. This gives
  \begin{align*}
    & \leq \max_{v \in \Delta([n])} \sum_i v_i\left( \| z_i- \mu_v\|_M^2 + \mu_v\tran M\mu_v - \sum_j v_j z_j\tran Mz_j\right)\\
    & = \max_{v \in \Delta([n])} \sum_i v_i\left( (z_i - \mu_v)\tran M (z_i - \mu_v) + \mu_v\tran M\mu_v - \sum_j v_j z_j\tran Mz_j \right) = 0. \tag*\qedhere
  \end{align*}
\end{proof}

To prove the analog of~\pref{lem:self_normalized_symmetric}, we
need several additional tools. First, we use Freedman's inequality to
derive a positive-semidefinite inequality relating the sample covariance
matrix to the population matrix.
\begin{lemma}
  \label{lem:matrix_bernstein}
  Let $X_1,\ldots,X_n$ be conditionally centered random vectors in
  $\RR^d$ adapted to a filtration $\{\Fcal_t\}_{t=1}^n$ with
  $\|X_i\|_2 \le 1$ almost surely. Define
  $\hat{\Sigma} \triangleq \sum_{i=1}^n X_i X_i\tran $ and
  $\Sigma \triangleq \sum_{i=1}^n \EE[X_iX_i\tran  \mid \Fcal_i]$. Then, with
  probability at least $1-\delta$, the following holds simultaneously
  for all unit vectors $v \in \RR^d$:
  \begin{align*}
    v\tran \Sigma v 
\leq 2v\tran \hat{\Sigma}v + 9d\log(9n) + 8\log(2/\delta).
  \end{align*}
\end{lemma}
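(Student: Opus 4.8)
The plan is to reduce the semidefinite statement to a one-dimensional martingale concentration inequality for each fixed direction, and then lift it to a uniform statement over the sphere by a covering argument.

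First I would fix a unit vector $v \in \RR^d$ and set $Y_i \triangleq \langle v, X_i\rangle^2 \in [0,1]$, so that $v\tran\hat\Sigma v = \sum_i Y_i$ and $v\tran\Sigma v = \sum_i \EE[Y_i\mid\Fcal_i]$; the goal becomes showing the predictable sum $\sum_i\EE[Y_i\mid\Fcal_i]$ cannot greatly exceed $2\sum_i Y_i$. Using convexity of $y\mapsto e^{-\lambda y}$ on $[0,1]$ gives the conditional estimate $\EE[e^{-\lambda Y_i}\mid\Fcal_i]\le 1 - (1-e^{-\lambda})\EE[Y_i\mid\Fcal_i] \le \exp\!\big(-(1-e^{-\lambda})\EE[Y_i\mid\Fcal_i]\big)$, so the process $L_t\triangleq\exp\!\big(-\lambda\sum_{i\le t}Y_i + (1-e^{-\lambda})\sum_{i\le t}\EE[Y_i\mid\Fcal_i]\big)$ is a nonnegative supermartingale with $\EE L_t\le1$. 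Applying Markov's inequality to $L_n$ and choosing $\lambda=\log 2$ (so that $1-e^{-\lambda}=1/2$ and $2\lambda = 2\log 2 < 2$) yields: with probability at least $1-\delta'$, $\ v\tran\Sigma v\le 2\, v\tran\hat\Sigma v + 2\log(1/\delta')$. Note that the boundedness $\|X_i\|_2\le1$ enters only through $Y_i\in[0,1]$, and no variance term appears.

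Next I would make this uniform over all unit $v$. Take a minimal $\epsilon$-net $\mathcal N$ of $S^{d-1}$ with $|\mathcal N|\le(1+2/\epsilon)^d$, apply the per-direction bound at each net point with failure probability $\delta/(2|\mathcal N|)$, and union bound. For arbitrary unit $v$, pick $v'\in\mathcal N$ with $\|v-v'\|_2\le\epsilon$; since each summand of $\Sigma$ and of $\hat\Sigma$ has operator norm at most $1$ we have $\Sigma\preceq nI$ and $\hat\Sigma\preceq nI$, hence $|v\tran\Sigma v-(v')\tran\Sigma v'|\le 2n\epsilon$ and likewise for $\hat\Sigma$. Chaining these three facts gives $v\tran\Sigma v\le 2\, v\tran\hat\Sigma v + 6n\epsilon + 2\log(2|\mathcal N|/\delta)$ for every unit $v$. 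Choosing $\epsilon = 1/n$ turns $6n\epsilon$ into a constant and bounds $2\log|\mathcal N|\le 2d\log(1+2n)$; the remaining slack in the constants (one could equally have taken $\lambda=1$ or a slightly coarser net) absorbs everything into the stated $9d\log(9n)+8\log(2/\delta)$.

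The main obstacle is precisely this uniformity step: as the footnote in the statement indicates, Matrix Bernstein is not usable here because it would reintroduce a dependence on the maximal per-step variance, so one is forced to discretize the sphere and then kill the discretization error with the crude boundedness estimates $\Sigma,\hat\Sigma\preceq nI$ — which is also why the bound unavoidably carries the $d\log n$ term. A secondary, purely clerical point is checking that the choices of $\lambda$, net radius $\epsilon$, and covering-number bound combine to land inside the claimed constants.
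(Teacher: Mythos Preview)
Your proposal is correct and follows the same two-stage architecture as the paper: a per-direction scalar concentration bound followed by an $\epsilon$-net over the sphere. The technical choices differ in both stages, and yours are a bit more elementary. For the fixed-$v$ step, the paper applies Freedman's inequality to the martingale $\sum_i v\tran(X_iX_i\tran-\EE[X_iX_i\tran\mid\Fcal_i])v$, bounds the predictable variance by $v\tran\Sigma v$, and then uses $2\sqrt{ab}\le a/2+2b$ to rearrange into $v\tran\Sigma v\le 2v\tran\hat\Sigma v+8\log(2/\delta)$; your direct multiplicative-Chernoff supermartingale (exploiting only $Y_i\in[0,1]$ via convexity of $e^{-\lambda y}$) reaches the same form with a tighter constant and without invoking Freedman. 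For the covering step, the paper works in the projection pseudo-metric $\|uu\tran-vv\tran\|_2$ and controls the discretization error via the nuclear-norm bound $\|\Sigma-2\hat\Sigma\|_\star\le 3dn$, whereas you cover in Euclidean norm and use the operator-norm estimate $\Sigma,\hat\Sigma\preceq nI$, which actually yields a smaller discretization error ($6n\epsilon$ versus $3dn\epsilon$). Both routes land comfortably inside the stated constants; your version is slightly shorter and avoids Freedman's inequality altogether.
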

This lemma is related to the Matrix Bernstein inequality, which can be
used to control $\|\Sigma - \hat{\Sigma}\|_2$, a quantity that is
quite similar to what we are controlling here. The Matrix Bernstein
inequality can be used to derive a high probability bound of the form
\begin{align*}
  \forall v \in \RR^d, \|v\|_2 = 1, \quad v\tran (\Sigma - \hat{\Sigma})v \leq  \frac{1}{2}\|\Sigma\|_2 + c \log(dn/\delta),
\end{align*}
for a constant $c > 0$. On one hand, this bound is stronger than ours
since the deviation term depends only logarithmically on the
dimension. However, the variance term involves the spectral norm
rather than a quantity that depends on $v$ as in our bound. Thus,
Matrix Bernstein is worse when $\Sigma$ is highly ill-conditioned, and
since we have essentially no guarantees on the spectrum of $\Sigma$,
our specialized inequality, which is more adaptive to the specific
direction $v$, is crucial.
Moreover, the worse dependence on $d$ is inconsequential, since the
error will only appear in a lower order term.

\begin{proof}
  First consider a single unit vector $v \in \RR^d$, we will apply a
  covering argument at the end of the proof. By assumption,
  the sequence of sums
    $\{\sum_{i=1}^\tau v\tran (X_i X_i\tran - \EE[X_i X_i\tran \mid \Fcal_i]) v\}_{\tau=1}^n$ is a
    martingale, so we may apply Freedman's
  inequality~\citep{freedman1975tail,beygelzimer2011contextual}, which
  states that with probability at least $1-\delta$
  \begin{align*}
    |v\tran (\hat{\Sigma} - \Sigma) v| \leq 2 \sqrt{\sum_{i=1}^n\Var(v\tran (X_iX_i\tran  - \EE [X_iX_i\tran \mid\Fcal_i])v\mid \Fcal_i) \log(2/\delta)} + 2 \log(2/\delta).
  \end{align*}
  Let us now upper bound the variance term: for each $i$,
  \begin{align*}
    \Var(v\tran (X_iX_i\tran  - \EE [X_iX_i\tran  \mid \Fcal_i])v \mid \Fcal_i) &\leq \EE[(v\tran (X_iX_i\tran  - \EE [X_iX_i\tran \mid \Fcal_i]  \mid\Fcal_i)v)^2 \mid \Fcal_i]\\
    & \leq \EE[(v\tran X_i)^4 \mid \Fcal_i] \leq v\tran  \EE[ X_iX_i\tran  \mid \Fcal_i] v,
  \end{align*}
  where the last inequality follows from the fact that
    $\|X_i\|_2\leq 1$ and $\|v\|_2\leq 1$.  Therefore, the cumulative
  conditional variance is at most $v\tran \Sigma v$. Plugging this
  into Freedman's inequality gives us
  \begin{align*}
    |v\tran (\hat{\Sigma} - \Sigma) v| \leq 2 \sqrt{v\tran \Sigma v \log(2/\delta)} + 2 \log(2/\delta).
  \end{align*}
  Now, using the fact that $2\sqrt{ab} \leq \alpha a + b/\alpha$ for any $\alpha > 0$, with the choice $\alpha = 1/2$, we get
  \begin{align*}
    |v\tran (\hat{\Sigma} - \Sigma)v | \leq v\tran \Sigma v/2 + 4 \log(2/\delta).
  \end{align*}
  Re-arranging, this implies
  \begin{align}
    \label{eq:psd_concentration_single}
    v\tran \Sigma v \leq 2 v\tran \hat{\Sigma}v + 8 \log(2/\delta),
  \end{align}
  which is what we would like to prove, but we need it to hold simultaneously for all unit vectors $v$.

  To do so, we now apply a covering argument. Let $N$ be an
  $\epsilon$-covering of the unit sphere in the projection
  pseudo-metric $d(u,v) = \|uu\tran - vv\tran \|_2$, with covering
  number $\Ncal(\epsilon)$. Then via a union bound, a version
  of~\pref{eq:psd_concentration_single} holds simultaneously for all
  $v \in N$, where we rescale $\delta \to \delta/\Ncal(\epsilon)$.

  Consider another unit vector $u$ and let $v$ be the covering element. We have
  \begin{align*}
    u\tran \Sigma u &= \tr(\Sigma(uu\tran  - vv\tran )) + v\tran \Sigma v \leq \tr(\Sigma(uu\tran  - vv\tran )) + 2v\tran \hat{\Sigma}v + 8 \log(2\Ncal(\epsilon)/\delta)\\
    & = \tr((\Sigma - 2\hat{\Sigma})(uu\tran  - vv\tran )) + 2u\tran \hat{\Sigma}u + 8\log(2\Ncal(\epsilon)/\delta)\\
    & \leq \|\Sigma - 2\hat{\Sigma}\|_{\star} \epsilon + 2u\tran \hat{\Sigma}u + 8\log(2\Ncal(\epsilon)/\delta).
  \end{align*}
  Here $\|\cdot\|_{\star}$ denotes the nuclear norm, which is dual to
  the spectral norm $\|\cdot\|_2$. Since all vectors are bounded by
  $1$, we obtain
  \begin{align*}
    \|\Sigma - 2\hat{\Sigma}\|_{\star} \leq d \lambda_{\max}(\Sigma - 2\hat{\Sigma}) \leq 3dn.
  \end{align*}
  Overall, the following bound holds simultaneously for all unit vectors $v \in \RR^d$, except with probability at most $\delta$:
  \begin{align*}
    v\tran \Sigma v \leq 3dn\epsilon + 2v\tran \hat{\Sigma}v + 8\log(2\Ncal(\epsilon)/\delta).
  \end{align*}

  The last step of the proof is to bound the covering number
  $\Ncal(\epsilon)$. For this, we argue that a covering of the unit
  sphere in the Euclidean norm suffices, and by standard volumetric
  arguments, this set has covering number at most $(3/\epsilon)^d$. To
  see why this suffices, let $u$ be a unit vector and let $v$ be the
  covering element in the Euclidean norm, which implies that
  $\|u - v\|_2 \le \epsilon$. Further assume that
  $\langle u,v \rangle > 0$, which imposes no restriction since the
  projection pseudo-metric is invariant to multiplying by $-1$. By
  definition we also have $\langle u,v\rangle \leq 1$. Note that the
  projection norm is equivalent to the sine of the principal angle
  between the two subspaces, which once we restrict to vectors with
  non-negative inner product means that
  $\|uu\tran  - vv\tran \|_2 = \sin \angle(u,v)$. Now
  \begin{align*}
    \sin \angle(u,v) = \sqrt{1 - \langle u, v\rangle^2} & = \sqrt{(1+\langle u,v\rangle)(1-\langle u, v\rangle)} \\
    & \leq \sqrt{2(1 - \langle u, v\rangle)} = \sqrt{\|u\|_2^2 + \|v\|_2^2 - 2\langle u, v\rangle} = \|u-v\|_2 \le \epsilon.
  \end{align*}
  Using the standard covering number bound, we now have
  \begin{align*}
    v\tran \Sigma v \leq 3dn\epsilon + 2v\tran \hat{\Sigma}v + 8d\log(3/\epsilon) + 8\log(2/\delta).
  \end{align*}
  Setting $\epsilon=1/(3n)$ gives
  \begin{align*}
    v\tran \Sigma v \leq d + 2v\tran \hat{\Sigma}v + 8d\log(9n) + 8\log(2/\delta) \leq 2v\tran \hat{\Sigma}v + 9d\log(9n) + 8\log(2/\delta). \tag*\qedhere
  \end{align*}
\end{proof}

With the positive semidefinite inequality, we can work towards a
self-normalized martingale concentration bound. The following is a
restatement of Lemma 7 from~\citet{delapena2009theory}.
\begin{lemma}[Lemma 7 of~\citet{delapena2009theory}]
\label{lem:exponentiated_asymmetric}
Let $\{X_i\}_{i=1}^n$ be a sequence of conditionally centered
vector-valued random variables adapted to the filtration
$\{\Fcal_{i}\}_{i=1}^n$ and such that $\|X_i\|_2 \le B$ for some
constant $B$. Then
\begin{align*}
U_n(\lambda) = \exp\left(\lambda\tran \sum_{i=1}^nX_i - \lambda\tran \left(\sum_{i=1}^nX_iX_i\tran  + \EE[X_iX_i\tran  | \Fcal_i]\right)\lambda/2\right)
\end{align*}
is a supermartingale with $\EE[U_n(\lambda)] \le 1$ for all $\lambda \in \RR^d$.
\end{lemma}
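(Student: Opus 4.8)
The plan is to follow the exponential-supermartingale recipe of~\citet{delapena2009theory} that already underlies the proof of~\pref{lem:self_normalized_symmetric}; the one essential change is that conditional symmetry is unavailable, so in place of the $\cosh$ bound we must subtract \emph{both} the empirical second moment $X_iX_i\tran$ and the conditional one $\EE[X_iX_i\tran \mid \Fcal_i]$. First I would factor
\begin{align*}
  U_n(\lambda) = \prod_{i=1}^n D_i(\lambda), \qquad D_i(\lambda) \triangleq \exp\left(\langle\lambda, X_i\rangle - \tfrac12\langle\lambda,X_i\rangle^2 - \tfrac12\lambda\tran\EE[X_iX_i\tran\mid\Fcal_i]\lambda\right),
\end{align*}
using $\sum_i\langle\lambda,X_i\rangle^2 = \lambda\tran(\sum_i X_iX_i\tran)\lambda$. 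Since $U_{i-1}(\lambda)$ is a function of $X_1,\dots,X_{i-1}$ and their conditional second moments and is therefore $\Fcal_i$-measurable, it suffices to prove the one-step bound $\EE[D_i(\lambda)\mid\Fcal_i]\le 1$: then $\EE[U_i(\lambda)] = \EE[U_{i-1}(\lambda)\,\EE[D_i(\lambda)\mid\Fcal_i]]\le\EE[U_{i-1}(\lambda)]$, and an induction from $U_0(\lambda)=1$ gives both the supermartingale property of $(U_m(\lambda))_m$ and $\EE[U_n(\lambda)]\le 1$.

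For the one-step bound, set $Y\triangleq\langle\lambda,X_i\rangle$ and $v\triangleq\lambda\tran\EE[X_iX_i\tran\mid\Fcal_i]\lambda=\EE[Y^2\mid\Fcal_i]$, so that $\EE[Y\mid\Fcal_i]=0$ because $X_i$ is conditionally centered and $\EE[D_i(\lambda)\mid\Fcal_i]=e^{-v/2}\,\EE[e^{Y-Y^2/2}\mid\Fcal_i]$. The crux is the elementary inequality
\begin{align*}
  e^{y-y^2/2}\le 1 + y + \tfrac{y^2}{2}\qquad\text{for all }y\in\RR,
\end{align*}
which I would prove by checking that $h(y)\triangleq 1+y+y^2/2-e^{y-y^2/2}$ satisfies $h(0)=h'(0)=0$ and $h''(y)=1-e^{y-y^2/2}(y^2-2y)>0$: on $y\in[0,2]$ this is immediate since $y^2-2y\le 0$, and for $y\notin[0,2]$ the substitution $u=(y-1)^2\ge 1$ rewrites the subtracted term as $\sqrt{e}\,e^{-u/2}(u-1)$, which is maximized at $u=3$ with value $2/e<1$. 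Taking conditional expectations and using $\EE[Y\mid\Fcal_i]=0$ together with $1+x\le e^x$ then gives $\EE[e^{Y-Y^2/2}\mid\Fcal_i]\le 1+v/2\le e^{v/2}$, so $\EE[D_i(\lambda)\mid\Fcal_i]\le 1$, as needed.

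I expect the ``obstacle'' here to be conceptual rather than computational, since this is essentially a restatement of a published lemma: one has to recognize that the symmetrization step from~\pref{lem:self_normalized_symmetric} genuinely fails without symmetry and that the correct repair is to over-subtract the \emph{random} quadratic $\langle\lambda,X_i\rangle^2$, which is precisely what makes the one-line calculus inequality suffice — and it is also the source of the $\Gamma_t + \EE \Gamma_t$ normalizer in the downstream bound rather than $\Gamma_t$ alone. The only care needed is bookkeeping: $X_i$ is the increment revealed after $\Fcal_i$ (so it is not itself $\Fcal_i$-measurable), whereas $\EE[X_iX_i\tran\mid\Fcal_i]$ is; and one should note that the boundedness $\|X_i\|_2\le B$ is not used in this lemma at all — it enters only when this supermartingale is later combined with the method of mixtures and~\pref{lem:matrix_bernstein} to produce the actual self-normalized concentration inequality.
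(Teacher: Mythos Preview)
Your proof is correct. The paper does not actually prove this lemma; it is stated as a direct restatement of Lemma~7 in~\citet{delapena2009theory} and invoked without argument, so there is no ``paper's own proof'' to compare against. What you have written is essentially the standard de~la~Pe\~na--Lai--Shao argument: factor $U_n(\lambda)$ into one-step increments, pull out the $\Fcal_i$-measurable factor $e^{-v/2}$, and control the remaining conditional expectation via the scalar inequality $e^{y-y^2/2}\le 1+y+y^2/2$. Your verification of that inequality (convexity of $h$ via the substitution $u=(y-1)^2$ and the bound $2/e<1$) is clean and correct, and your remarks about the filtration bookkeeping and the irrelevance of the boundedness constant $B$ to this particular lemma are both accurate.
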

The lemma is related to~\pref{eq:two_arm_exponentiated}, but does not
require that conditional probability law for $X_i$ is symmetric, which
we used previously. To remove the symmetry requirement, it is crucial
that the quadratic self-normalization has both empirical and
population terms. With this lemma, the same argument as in the proof
of~\pref{lem:self_normalized_symmetric}, yields a self-normalized tail
bound.
\begin{lemma}
  \label{lem:self_normalized}
  Let $\{\Fcal_t\}_{t=1}^T$ be a filtration and let
  $\{(Z_t,\zeta_t)\}_{t=1}^T$ be a stochastic process with
  $Z_t \in \RR^d$ and $\zeta_t \in \RR$ such that (1) $(Z_t,\zeta_t)$
  is $\Fcal_t$ measurable, (2) $|\zeta_t| \le M$ for all $t \in [T]$,
  (3) $Z_t \independent \zeta_t | \Fcal_t$, and (4)
  $\EE[Z_t | \Fcal_t]=0$. Let $\hat{\Sigma} \triangleq \sum_{t=1}^T Z_t Z_t\tran $
  and $\Sigma \triangleq \sum_{t=1}^T \EE[ Z_tZ_T\tran  | \Fcal_t]$. Then for any
  positive definite matrix $Q$ we have
  \begin{align*}
    \PP\left[ \left\|\sum_{t=1}^TZ_t\zeta_t\right\|_{(Q + M^2(\hat{\Sigma}+\Sigma))^{-1}}^2 \ge 2 \log\left(\frac{1}{\delta}\sqrt{\frac{\det(Q+M^2(\hat{\Sigma}+\Sigma))}{\det(Q)}}\right)\right] \le \delta.
  \end{align*}
\end{lemma}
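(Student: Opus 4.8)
The plan is to mirror the proof of \pref{lem:self_normalized_symmetric}, replacing the bespoke symmetric exponentiated supermartingale with the one supplied by \pref{lem:exponentiated_asymmetric}, and then cleaning up the resulting self-normalizer via a positive-semidefinite comparison. Set $X_t \triangleq Z_t\zeta_t$, so that $S \triangleq \sum_{t=1}^T Z_t\zeta_t = \sum_{t=1}^T X_t$. First I would check the hypotheses of \pref{lem:exponentiated_asymmetric} for $\{X_t\}$: conditional independence of $Z_t$ and $\zeta_t$ together with $\EE[Z_t\mid\Fcal_t]=0$ gives $\EE[X_t\mid\Fcal_t]=0$, and $|\zeta_t|\le M$ controls the increments in terms of $Z_t$. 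Hence, writing $A \triangleq \sum_{t=1}^T\big(X_tX_t\tran + \EE[X_tX_t\tran\mid\Fcal_t]\big)$, the process $U_T(\lambda)=\exp\!\big(\langle\lambda,S\rangle-\tfrac12\lambda\tran A\lambda\big)$ satisfies $\EE[U_T(\lambda)]\le 1$ for every $\lambda\in\RR^d$.

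Next I would apply the method of mixtures exactly as in the symmetric proof: integrate the bound $\EE[U_T(\lambda)]\le 1$ against $\lambda\sim\Ncal(0,Q^{-1})$, swap expectation and integral by Tonelli (the integrand is nonnegative), and complete the square in $\lambda$. Because $A\succeq 0$ and $Q$ is positive definite, this gives
\begin{align*}
1 \geq \EE\!\left[\sqrt{\frac{\det Q}{\det(A+Q)}}\,\exp\!\Big(\tfrac12\,S\tran(A+Q)^{-1}S\Big)\right].
\end{align*}

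The only genuinely new ingredient relative to the symmetric case is to pass from the random matrix $A$ to the intended normalizer $\bar\Sigma \triangleq Q + M^2(\hat\Sigma+\Sigma)$. From $|\zeta_t|\le M$ we get $X_tX_t\tran = \zeta_t^2 Z_tZ_t\tran \preceq M^2 Z_tZ_t\tran$, and from $Z_t\independent\zeta_t\mid\Fcal_t$ we get $\EE[X_tX_t\tran\mid\Fcal_t] = \EE[\zeta_t^2\mid\Fcal_t]\,\EE[Z_tZ_t\tran\mid\Fcal_t]\preceq M^2\,\EE[Z_tZ_t\tran\mid\Fcal_t]$; summing yields $A\preceq M^2(\hat\Sigma+\Sigma)$, hence $A+Q\preceq\bar\Sigma$. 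This comparison is in the favorable direction in both places it enters: $\det(A+Q)\le\det\bar\Sigma$ only enlarges the determinant prefactor, while $(A+Q)^{-1}\succeq\bar\Sigma^{-1}$ only enlarges the quadratic form in the exponent. Therefore the expectation displayed above dominates $\EE[\sqrt{\det Q/\det\bar\Sigma}\,\exp(\tfrac12\|S\|_{\bar\Sigma^{-1}}^2)]$, which is thus also at most $1$.

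Finally I would conclude with the same Chernoff/Markov step that opens the proof of \pref{lem:self_normalized_symmetric}: the event $\{\|S\|_{\bar\Sigma^{-1}}^2\ge 2\log(\tfrac1\delta\sqrt{\det\bar\Sigma/\det Q})\}$ coincides with $\{\sqrt{\det Q/\det\bar\Sigma}\,\exp(\tfrac12\|S\|_{\bar\Sigma^{-1}}^2)\ge 1/\delta\}$ (here $\bar\Sigma$ is random but appears on both sides and cancels), so Markov's inequality bounds its probability by $\delta\cdot\EE[\sqrt{\det Q/\det\bar\Sigma}\,\exp(\tfrac12\|S\|_{\bar\Sigma^{-1}}^2)]\le\delta$. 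There is no serious obstacle in this argument itself: the hard part --- handling the lack of conditional symmetry of $Z_t$, which forces the normalizer to carry both the empirical term $\hat\Sigma$ and the predictable term $\Sigma$ --- is entirely encapsulated in the cited \pref{lem:exponentiated_asymmetric}, and (via \pref{lem:matrix_bernstein} and the surrounding semidefinite inequality) a further, separate step is later needed to trade the extra $\Sigma$ back for a bound in terms of $\hat\Sigma$ alone, which is the quantity the estimator's error actually depends on.
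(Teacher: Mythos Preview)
Your proposal is correct and follows the paper's approach: replace the symmetric supermartingale in the proof of \pref{lem:self_normalized_symmetric} by the one furnished by \pref{lem:exponentiated_asymmetric}, then run the same method-of-mixtures and Markov argument. The one step you make explicit that the paper's one-line proof suppresses is the semidefinite comparison $A=\sum_t\big(\zeta_t^2 Z_tZ_t\tran+\EE[\zeta_t^2 Z_tZ_t\tran\mid\Fcal_t]\big)\preceq M^2(\hat\Sigma+\Sigma)$, which is indeed needed to pass from the normalizer produced by \pref{lem:exponentiated_asymmetric} applied to $X_t=Z_t\zeta_t$ to the normalizer in the lemma's statement; your monotonicity observation (both the determinant and the inverse move in the favorable direction) is exactly the right way to handle it.
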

\begin{proof}
The proof is identical to~\pref{lem:self_normalized_symmetric},
but uses~\pref{lem:exponentiated_asymmetric} in lieu of~\pref{eq:two_arm_exponentiated}.
\end{proof}

We can now analyze the influence-adjusted estimator.
\begin{lemma}
  \label{lem:estimator_general}
  Under~\pref{as:env} and~\pref{as:bd} and assuming that
  $\lambda \ge 4d\log(9T) + 8\log(4T/\delta)$, with probability at
  least $1-\delta$, the following holds simultaneously for all $t \in
  [T]$: \begin{align*}
\|\hat{\theta}_t - \theta\|_{\Gamma_t} \leq \sqrt{\lambda} + \sqrt{27d\log(1 + 2T/d) + 54\log(4T/\delta)}.
  \end{align*}
\end{lemma}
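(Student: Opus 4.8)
The plan is to mirror the two-action argument behind \pref{lem:estimator}, paying an extra constant factor for the loss of conditional symmetry once $|\Acal|>2$. As in that proof, write $\hat\theta_t = \Gamma_t^{-1}(\Gamma_t-\lambda I)\theta + \Gamma_t^{-1}S_t$ with $Z_\tau \triangleq z_{\tau,a_\tau}-\mu_\tau$, $\zeta_\tau \triangleq \langle\theta,\mu_\tau\rangle + f_\tau(x_\tau)+\xi_\tau$, and $S_t \triangleq \sum_{\tau=1}^{t-1}Z_\tau\zeta_\tau$, so that by the triangle inequality $\|\hat\theta_t-\theta\|_{\Gamma_t} \le \sqrt\lambda + \|S_t\|_{\Gamma_t^{-1}}$. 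Preconditions $(1)$--$(4)$ of \pref{lem:self_normalized} hold verbatim: $(Z_\tau,\zeta_\tau)$ is $\Fcal_\tau$-measurable, $|\zeta_\tau|\le 3 \triangleq M$ by \pref{as:bd}, $Z_\tau \independent \zeta_\tau \mid \Fcal_\tau$ because $\xi_\tau$ is action-independent, and $\EE[Z_\tau\mid\Fcal_\tau]=0$ by the definition of $\mu_\tau$. The one thing we lose relative to the two-action case is precondition $(5)$ of \pref{lem:self_normalized_symmetric}: with three or more surviving actions $Z_\tau$ is no longer conditionally symmetric, so we must use the asymmetric \pref{lem:self_normalized}, whose self-normalization necessarily involves both the empirical covariance $\hat\Sigma_t \triangleq \sum_{\tau<t}Z_\tau Z_\tau\tran$ appearing inside $\Gamma_t = \lambda I + \hat\Sigma_t$ and its conditional-mean analogue $\Sigma_t \triangleq \sum_{\tau<t}\EE[Z_\tau Z_\tau\tran\mid\Fcal_\tau]$.

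Applying \pref{lem:self_normalized} with $Q = M^2\lambda I$ and cancelling the common $M^2$ from the determinant ratio gives, for a fixed $t$ with probability at least $1-\delta'$,
\begin{align*}
  \|S_t\|_{(\lambda I + \hat\Sigma_t + \Sigma_t)^{-1}}^2 \;\le\; 2M^2\log\!\left(\frac1{\delta'}\sqrt{\frac{\det(\lambda I + \hat\Sigma_t + \Sigma_t)}{\lambda^{d}}}\right).
\end{align*}
Since $\|z_{\tau,a}\|_2\le 1$ forces $\|Z_\tau\|_2\le 2$, we have $\tr(\hat\Sigma_t+\Sigma_t)\le 8T$, so the spectral AM--GM bound of \pref{lem:det_to_trace} gives $\det(\lambda I+\hat\Sigma_t+\Sigma_t) \le (\lambda + 8T/d)^d$; since the hypothesis forces $\lambda\ge 4$, the determinant ratio is at most $(1+2T/d)^d$, and the right-hand side above is at most $M^2 d\log(1+2T/d) + 2M^2\log(1/\delta')$.

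It remains to replace the mixed normalizer $\lambda I + \hat\Sigma_t + \Sigma_t$ by $\Gamma_t = \lambda I + \hat\Sigma_t$, which is the crux. Here I would invoke the direction-adaptive semidefinite concentration of \pref{lem:matrix_bernstein} (applied to the rescaled vectors $Z_\tau/2$ of norm at most $1$, with a union bound over $t\in[T]$), which yields a comparison $\Sigma_t \preceq 2\hat\Sigma_t + c\,I$ with $c = O\bigl(d\log T + \log(T/\delta)\bigr)$; ordinary Matrix Bernstein will not do, because it introduces $\|\Sigma_t\|_2$, on which we have no control. Then $\lambda I + \hat\Sigma_t + \Sigma_t \preceq 3\hat\Sigma_t + (\lambda + c)I \preceq 3\Gamma_t$, where the last step uses precisely that $\lambda \ge 4d\log(9T)+8\log(4T/\delta)$ is large enough to absorb $c$. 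Hence $\Gamma_t^{-1}\preceq 3(\lambda I + \hat\Sigma_t + \Sigma_t)^{-1}$ and $\|S_t\|_{\Gamma_t^{-1}}^2 \le 3\bigl(M^2 d\log(1+2T/d) + 2M^2\log(1/\delta')\bigr)$; with $M^2 = 9$ this is $27d\log(1+2T/d) + 54\log(1/\delta')$, exactly the $9$/$18$ of the two-action bound tripled. A union bound over the $T$ rounds, with the failure probability split between \pref{lem:self_normalized} and \pref{lem:matrix_bernstein} so that it totals $\delta$ (which sets $\delta'$ to a constant multiple of $\delta/T$, matching the $4T/\delta$ inside the logarithm), then gives $\|\hat\theta_t-\theta\|_{\Gamma_t}\le\sqrt\lambda + \sqrt{27d\log(1+2T/d)+54\log(4T/\delta)}$ simultaneously for all $t\in[T]$. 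The main obstacle is exactly this semidefinite conversion: the asymmetric self-normalized inequality can only certify $S_t$ in the weaker norm built from $\hat\Sigma_t+\Sigma_t$, and returning to the estimator's natural norm $\Gamma_t^{-1}$ needs both the per-direction covariance concentration of \pref{lem:matrix_bernstein} and a ridge parameter chosen large enough to swallow its additive error term; the rest is the same determinant and union-bound bookkeeping as in the two-action proof.
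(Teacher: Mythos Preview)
Your approach is essentially the paper's: the same decomposition $\|\hat\theta_t-\theta\|_{\Gamma_t}\le\sqrt\lambda+\|S_t\|_{\Gamma_t^{-1}}$, the same appeal to the asymmetric self-normalized inequality (\pref{lem:self_normalized}) in place of the symmetric one, the same use of \pref{lem:matrix_bernstein} to pass between $\hat\Sigma_t+\Sigma_t$ and $\hat\Sigma_t$, and the same determinant/union-bound bookkeeping.

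The one substantive difference is the \emph{order} and the choice of $Q$. You apply \pref{lem:self_normalized} first with $Q=M^2\lambda I$ and then argue $\lambda I+\hat\Sigma_t+\Sigma_t\preceq 3\Gamma_t$, which needs $\Sigma_t\preceq 2\hat\Sigma_t+2\lambda I$, i.e.\ $c\le 2\lambda$ where $c$ is the additive term from \pref{lem:matrix_bernstein}. The paper instead first derives $\Gamma_t\succeq\tfrac13\bigl((3\lambda-c)I+\hat\Sigma_t+\Sigma_t\bigr)$ and then invokes \pref{lem:self_normalized} with the smaller $Q=M^2(3\lambda-c)I$; this only needs $3\lambda>c$ (and $3\lambda-c\ge 1$ for the determinant bound). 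With the paper's constant $c=9d\log(9T)+8\log(4T/\delta)$, the hypothesis $\lambda\ge 4d\log(9T)+8\log(4T/\delta)$ gives $3\lambda>c$ but not $2\lambda\ge c$ in general (the $d\log(9T)$ term is off by one), so your ordering does not quite close with the stated $\lambda$. This is a minor constant-level slack, not a conceptual gap: either adopt the paper's choice of $Q$, or note that a modestly larger $\lambda$ would suffice for your route.
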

\begin{proof}
Using the same argument as in the proof of~\pref{lem:estimator}, we get
\begin{align*}
  \|\hat{\theta}_t - \theta\|_{\Gamma_t} \leq \sqrt{\lambda} + \left\|\sum_{\tau=1}^{t-1} Z_\tau\zeta_\tau\right\|_{\Gamma_t^{-1}},
\end{align*}
where $Z_\tau \triangleq z_\tau - \mu_\tau$ and
$\zeta_\tau \triangleq \langle \theta, \mu_\tau\rangle + f_\tau(x_\tau) +
\xi_\tau$, just as before. Now we must control this error term, for
which we need both~\pref{lem:matrix_bernstein}
and~\pref{lem:self_normalized}. Apply~\pref{lem:matrix_bernstein}
to the vectors $Z_\tau$, setting
$\hat{\Sigma}_t \triangleq \sum_{\tau=1}^{t-1}Z_\tau Z_\tau\tran $ and
$\Sigma_t \triangleq \sum_{\tau=1}^{t-1}\EE[Z_\tau Z_\tau \mid \Fcal_\tau]$. With
probability at least $1-\delta/(2T)$, we have that for all unit
vectors $v \in \RR^d$
\begin{align*}
v\tran \Sigma_t v \le 2v\tran \hat{\Sigma}_tv + 9d\log(9t) + 8\log(4T/\delta) \leq 2v\tran \hat{\Sigma}_tv + 9d\log(9T) + 8\log(4T/\delta).
\end{align*}
This implies a lower bound on all quadratic forms involving
$\hat{\Sigma}_t$, which leads to positive semidefinite inequality
\begin{align*}
\lambda I + \hat{\Sigma}_t\succeq (\lambda - 3d\log(9T)-8/3 \log(4T/\delta))I + (\hat{\Sigma}_t + \Sigma_t)/3.
\end{align*}
This means that for any vector $v$, we have
\begin{align*}
  \|v\|_{(\lambda I+\hat{\Sigma}_t)^{-1}}^2 & \leq \|v\|_{((\lambda - 3d\log(9T)-8/3 \log(4T/\delta))I + (\hat{\Sigma}_t+\Sigma_t)/3)^{-1}}^2\\
  & \leq 3\|v\|_{((3\lambda - 9d\log(9T) - 8\log(4T/\delta))I + \hat{\Sigma}_t+\Sigma_t)^{-1}}^2.
\end{align*}
Before we apply~\pref{lem:self_normalized}, we must introduce
the range parameter $M$. Fix a round $t$ and let $A \triangleq ((3\lambda -
9d\log(9T) - 8\log(4T/\delta))I + \hat{\Sigma}_t + \Sigma_t)$ denote
the matrix in the Mahalanobis norm. Then,
\begin{align*}
\left  \|\sum_{\tau=1}^{t-1} Z_\tau\zeta_\tau\right\|_{A^{-1}}^2 = M^2\left \|\sum_{\tau=1}^{t-1} Z_\tau\zeta_\tau\right\|_{(M^2A)^{-1}}^2.
\end{align*}
Now apply~\pref{lem:self_normalized} with $Q \triangleq M^2(3\lambda -
9d\log(9T) - 8\log(4T/\delta))I$. Since we require $Q \succ 0$, this
requires $\lambda > 3d\log(9T) - 8/3\log(4T/\delta)$, which is
satisfied under the preconditions for the lemma. Under this
assumption, we get
\begin{align*}
  \|\sum_{\tau=1}^{t-1} Z_\tau\zeta_\tau\|_{(\lambda I + \hat{\Sigma}_t)^{-1}}^2 & \leq 3M^2\|\sum_{\tau=1}^{t-1} Z_\tau\zeta_\tau\|_{(Q + M^2(\hat{\Sigma}_t + \Sigma_t))^{-1}}^2 \\\
  & \leq 6M^2\log\left(\frac{4T}{\delta}\sqrt{\frac{\det(Q+M^2(\hat{\Sigma}_t + \Sigma_t))}{\det(Q)}}\right),
\end{align*}
with probability at least $1-\delta/(2T)$. With a union bound, the
inequality holds simultaneously for all $T$, with probability at least
$1-\delta$.

The last step is to analyze the determinant. Using the same argument
as in the proof of~\pref{lem:det_to_trace}, it is not hard to
show that
\begin{align*}
\left(\frac{\det(Q + M^2(\hat{\Sigma}_t + \Sigma_t))}{\det(Q)}\right)^{1/d} \leq 1 + \frac{2(t-1)}{d(3\lambda - 9d\log(9T) - 8\log(4T/\delta))}.
\end{align*}
If we impose the slightly stronger condition that $\lambda \ge
4d\log(9T) + 8\log(4T/\delta)$, then the term in the denominator is at
least $1$, and then we have that
\begin{align*}
\|\hat{\theta}_t - \theta\|_{\Gamma_t} \leq \sqrt{\lambda} + \sqrt{6M^2\log(4T/\delta) + 3dM^2\log(1 + 2T/d)}.
\end{align*}
Finally, as in the two-action case, we use the fact that $|\zeta_t| \le
3 \triangleq M$.
\end{proof}

Recall the setting of
$\ellipsoid \triangleq \sqrt{\lambda} + \sqrt{27d\log(1 + 2T/d) +
  54\log(4T/\delta)}$ and the definition of
$\lambda \triangleq 4d\log(9T) + 8\log(4T/\delta)$. For the remainder of the
proof, condition on the probability $1-\delta$ event that~\pref{lem:estimator_general} holds. We now turn to analyzing the
regret.
\begin{lemma}
\label{lem:regret_transform_general}
Let $\mu_t \triangleq \EE_{a \sim \pi_t}z_{t,a}$ where $\pi_t$ is the solution
to~\pref{eq:optimize} and assume the conclusion of~\pref{lem:estimator_general} holds. Then with probability at
least $1-\delta$
\begin{align*}
\reg(T)\leq (1+6\ellipsoid)\sqrt{2T\log(2/\delta)} + 3\ellipsoid\sqrt{T\sum_{t=1}^T\tr(\Gamma_t^{-1}(z_{t,a_t} - \mu_t)(z_{t,a_t}-\mu_t)\tran )}.
\end{align*}
\end{lemma}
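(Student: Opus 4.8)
The plan is to extend the two-action argument of \pref{lem:selection_two_arm}, with the covariance $\Cov_{b\sim\pi_t}(z_{t,b})$ taking over the role of the single rank-one term that appeared there. Write $\reg(T) = \sum_{t=1}^T \langle\theta, z_{t,a_t^\star} - z_{t,a_t}\rangle = (\mathrm{I}) + (\mathrm{II})$, where $(\mathrm{I}) \triangleq \sum_{t=1}^T \langle\theta, z_{t,a_t^\star} - \mu_t\rangle$ is the conditionally expected regret and $(\mathrm{II}) \triangleq \sum_{t=1}^T \langle\theta, \mu_t - z_{t,a_t}\rangle$. Since $a_t\sim\pi_t$ and $\mu_t=\EE_{a\sim\pi_t}[z_{t,a}\mid x_t]$, the summands of $(\mathrm{II})$ form a martingale difference sequence, and each summand ranges over an interval of width at most $2$ by \pref{as:bd}; Azuma--Hoeffding then gives $(\mathrm{II}) \le \sqrt{2T\log(2/\delta)}$ with probability at least $1-\delta/2$, which accounts for the ``$1$'' in the claimed bound. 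Before treating $(\mathrm{I})$ I would first check that the optimal action survives the filter, i.e.\ $a_t^\star\in\Acal_t$: otherwise \pref{eq:filter} supplies an action $b$ with $\langle\hat\theta_t, z_{t,b}-z_{t,a_t^\star}\rangle > \ellipsoid\|z_{t,a_t^\star}-z_{t,b}\|_{\Gamma_t^{-1}}$, and combining this with $\|\theta-\hat\theta_t\|_{\Gamma_t}\le\ellipsoid$ from \pref{lem:estimator_general} and the Cauchy--Schwarz bound $\langle u,v\rangle\le\|u\|_{\Gamma_t}\|v\|_{\Gamma_t^{-1}}$ forces $\langle\theta, z_{t,b}-z_{t,a_t^\star}\rangle>0$, contradicting optimality of $a_t^\star$.

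For $(\mathrm{I})$ I would bound the summand $\langle\theta, z_{t,a_t^\star}-\mu_t\rangle$ one round at a time, splitting it as $\langle\hat\theta_t, z_{t,a_t^\star}-\mu_t\rangle + \langle\theta-\hat\theta_t, z_{t,a_t^\star}-\mu_t\rangle$. The second term is at most $\ellipsoid\|z_{t,a_t^\star}-\mu_t\|_{\Gamma_t^{-1}} \le \ellipsoid\sqrt{\tr(\Gamma_t^{-1}\Cov_{b\sim\pi_t}(z_{t,b}))}$ by \pref{lem:estimator_general} and then the feasibility constraint \pref{eq:optimize} applied at $a=a_t^\star\in\Acal_t$. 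For the first term, write $\langle\hat\theta_t, z_{t,a_t^\star}-\mu_t\rangle = \EE_{b\sim\pi_t}\langle\hat\theta_t, z_{t,a_t^\star}-z_{t,b}\rangle$; each $b$ in the support of $\pi_t$ is in $\Acal_t$ and was not eliminated by $a_t^\star$, so \pref{eq:filter} gives $\langle\hat\theta_t, z_{t,a_t^\star}-z_{t,b}\rangle \le \ellipsoid\|z_{t,a_t^\star}-z_{t,b}\|_{\Gamma_t^{-1}}$, and Jensen bounds the term by $\ellipsoid\sqrt{\EE_{b\sim\pi_t}\|z_{t,a_t^\star}-z_{t,b}\|_{\Gamma_t^{-1}}^2}$. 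A bias--variance decomposition around $\mu_t$ (the cross term vanishes because $\EE_{b\sim\pi_t}[z_{t,b}-\mu_t]=0$) rewrites the expectation as $\|z_{t,a_t^\star}-\mu_t\|_{\Gamma_t^{-1}}^2 + \tr(\Gamma_t^{-1}\Cov_{b\sim\pi_t}(z_{t,b})) \le 2\tr(\Gamma_t^{-1}\Cov_{b\sim\pi_t}(z_{t,b}))$, again by \pref{eq:optimize}. Summing the two contributions gives $\langle\theta, z_{t,a_t^\star}-\mu_t\rangle \le (1+\sqrt2)\ellipsoid\sqrt{\tr(\Gamma_t^{-1}\Cov_{b\sim\pi_t}(z_{t,b}))} \le 3\ellipsoid\sqrt{\tr(\Gamma_t^{-1}\Cov_{b\sim\pi_t}(z_{t,b}))}$, and Cauchy--Schwarz over $t$ yields $(\mathrm{I}) \le 3\ellipsoid\sqrt{T\sum_{t=1}^T\tr(\Gamma_t^{-1}\Cov_{b\sim\pi_t}(z_{t,b}))}$.

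The last step --- and the main obstacle --- is to replace the population covariance $\Cov_{b\sim\pi_t}(z_{t,b})$ inside the sum by the rank-one update $M_t \triangleq (z_{t,a_t}-\mu_t)(z_{t,a_t}-\mu_t)\tran$ that actually enters $\Gamma_{t+1}$. Conditioning on the $\sigma$-field $\Fcal_t$ just before $a_t$ is sampled, one has $\EE[\tr(\Gamma_t^{-1}M_t)\mid\Fcal_t] = \tr(\Gamma_t^{-1}\Cov_{b\sim\pi_t}(z_{t,b}))$, and since $\Gamma_t\succeq\lambda I$ and $\|z_{t,a_t}-\mu_t\|_2\le2$ we get $0\le\tr(\Gamma_t^{-1}M_t)\le4/\lambda$, so the differences $\tr(\Gamma_t^{-1}\Cov_{b\sim\pi_t}(z_{t,b}))-\tr(\Gamma_t^{-1}M_t)$ form a bounded martingale difference sequence with conditional variances at most $(4/\lambda)\tr(\Gamma_t^{-1}\Cov_{b\sim\pi_t}(z_{t,b}))$. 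A plain Azuma bound is too weak here (it contributes a $T^{3/4}$ term after the outer square root), so I would instead invoke Freedman's inequality, whose variance proxy is a $(4/\lambda)$-multiple of the sum $\sum_t\tr(\Gamma_t^{-1}\Cov_{b\sim\pi_t}(z_{t,b}))$ itself. The resulting self-referential inequality can be resolved \emph{with leading constant one}: since $\lambda = 4d\log(9T)+8\log(4T/\delta)$ dominates $\log(1/\delta)$ and, via a log-determinant potential argument (as in the proof of \pref{lem:regret}) applied to the updates $M_t$, also dominates the $\otil(d)$ quantity $\sum_t\tr(\Gamma_t^{-1}M_t)$, one gets $\sum_t\tr(\Gamma_t^{-1}\Cov_{b\sim\pi_t}(z_{t,b})) \le \sum_t\tr(\Gamma_t^{-1}M_t) + 8\log(2/\delta)$ with probability at least $1-\delta/2$. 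Plugging this into the bound on $(\mathrm{I})$ and using $\sqrt{a+b}\le\sqrt a+\sqrt b$ produces $3\ellipsoid\sqrt{T\sum_t\tr(\Gamma_t^{-1}M_t)} + 6\ellipsoid\sqrt{2T\log(2/\delta)}$; adding the Azuma bound on $(\mathrm{II})$ and union-bounding over the two $\delta/2$ failure events gives the stated inequality. The genuinely new ingredient relative to the binary case is this Freedman step: spotting that the variance is self-referential and solving it against the largeness of $\lambda$ so that the leading constant stays at exactly $1$.
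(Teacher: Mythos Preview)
Your proposal is correct and follows essentially the same three-step argument as the paper: Azuma for the realized-minus-expected regret, then the estimator confidence bound together with the filter~\pref{eq:filter} and feasibility~\pref{eq:optimize} to bound the expected per-round regret by $3\ellipsoid\sqrt{\tr(\Gamma_t^{-1}\Cov_{b\sim\pi_t}(z_{t,b}))}$, and finally Freedman's inequality to swap the population covariance for the realized rank-one update $M_t$. (Your Jensen-plus-bias--variance handling of the $\langle\hat\theta_t,\cdot\rangle$ term is a cosmetic variant of the paper's triangle-inequality route; both land on the constant $3$.)

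The one place you genuinely deviate is in resolving the self-referential Freedman inequality. The paper simply bounds each increment by $1$ (using $\lambda\ge4$), takes $V\le S$, and applies the crude AM--GM split $2\sqrt{S\log(2/\delta)}\le S/2+2\log(2/\delta)$ to obtain $S\le 2R+8\log(2/\delta)$; this in fact leaves a stray $\sqrt2$ on the $\sqrt{TR}$ term in the paper's own final display, a harmless discrepancy with the lemma's stated constants. Your sharper argument---tracking the $4/\lambda$ factor in the increment and variance bounds and then using that $\lambda$ dominates both $\log(1/\delta)$ and (via~\pref{lem:regret}) $R$ itself---is the right instinct for eliminating that $\sqrt2$. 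The only caveat is that the precise intermediate claim $S\le R+8\log(2/\delta)$ is not quite true for all $\delta$: solving the quadratic gives an additive term of order $\sqrt{R}$ rather than $\log(2/\delta)$ alone, though this still suffices for $\sqrt{S}\le\sqrt{R}+2\sqrt{2\log(2/\delta)}$ and hence for the stated bound. Either resolution is immaterial to the final $\otil(d\sqrt{T})$ regret.
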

This lemma is slightly more complicated than~\pref{lem:selection_two_arm}.
\begin{proof}
First, using the same application of Azuma's inequality as in the
proof of~\pref{lem:selection_two_arm}, with probability
$1-\delta/2$, we have
\begin{align*}
\reg(T)\leq \sqrt{2T\log(2/\delta)} + \sum_{t=1}^T \EE_{a \sim \pi_t} [\langle \theta, z_{t,a_t^\star} - z_{t,a} \rangle \mid \Fcal_{t}].
\end{align*}
Now we work with this latter expected regret
\begin{align*}
\sum_{t=1}^T \EE_{a \sim \pi_t} [\langle \theta, z_{t,a_t^\star} - z_{t,a}\rangle \mid \Fcal_{t}] = \sum_{t=1}^T \langle \theta, z_{t,a_t^\star} - \mu_t \rangle \leq \sum_{t=1}^T \langle \hat{\theta}, z_{t,a_t^\star} - \mu_t\rangle + \ellipsoid\| z_{t,a_t^\star} - \mu_t\|_{\Gamma_t^{-1}}.
\end{align*}
For the first term, we use the filtration condition~\pref{eq:filter}
\begin{align*}
\langle \hat{\theta}, z_{t,a_t^\star} - \mu_t\rangle &= \sum_{a \in \Acal_t} \pi_t(a) \langle \hat{\theta},z_{t,a_t^\star} - z_{t,a}\rangle \leq \ellipsoid \sum_{a \in \Acal_t} \pi_t(a) \|z_{t,a_t^\star} - z_{t,a}\|_{\Gamma_t^{-1}}\\
& \leq \ellipsoid\|z_{t,a_t^\star} - \mu_t\|_{\Gamma_t^{-1}} + \ellipsoid \sum_{a \in \Acal_t}  \pi_t(a) \|z_{t,a} - \mu_t\|_{\Gamma_t^{-1}}.
\end{align*}
Applying the feasibility condition in~\pref{eq:optimize}, we
can bound the expected regret by
\begin{align*}
\sum_{t=1}^T \EE_{a \sim \pi_t} [\langle \theta, z_{t,a_t^\star} - z_{t,a}\rangle \mid \Fcal_{t}] \leq 3\ellipsoid \sum_{t=1}^T \sqrt{\tr(\Gamma_t^{-1} \Cov_{a \sim \pi_t}(z_{t,a}))} \leq 3\ellipsoid\sqrt{T \sum_{t=1}^T \tr(\Gamma_t^{-1} \Cov_{a \sim \pi_t}(z_{t,a}))}.
\end{align*}
To complete the proof, we need to relate the covariance, which takes
expectation over the random action, with the particular realization in
the algorithm, since this realization affects the term
$\Gamma_{t+1}$. Let $Z_t \triangleq z_{t,a_t} - \mu_t$ denote the
centered realization, then the covariance term is
\[
  \Cov_{a\sim \pi_t}(z_{t, a}) = \EE[ Z_t Z_t\tran\mid \Fcal_{t} ]
\]
In order to derive a bound on
$\sum_{t=1}^T \tr(\Gamma_t^{-1} \Cov_{a \sim \pi_t}(z_{t,a}))$, we
first consider the following
\begin{align*}
\sum_{t=1}^T\tr(\Gamma_{t}^{-1}\EE[Z_tZ_t\tran  \mid \Fcal_t]) - \tr(\Gamma_t^{-1} Z_tZ_t\tran ).
\end{align*}
Observe that sequence of sums
  $\{\sum_{t=1}^\tau \tr(\Gamma_{t}^{-1}\EE[Z_tZ_t\tran \mid \Fcal_t]) -
  \tr(\Gamma_t^{-1} Z_tZ_t\tran )\}_{\tau=1}^T$ is a martingale. Also, each term
$\tr(\Gamma_{t}^{-1}\EE[Z_tZ_t\tran \mid \Fcal_t]) - \tr(\Gamma_t^{-1}
Z_tZ_t\tran )$ is bounded by $1$ because $\Gamma_1 = \lambda I$ and
$\lambda > 1$. Applying the Freedman's inequality reveals that with
probability at least $1-\delta/2$
\begin{align*}
\sum_{t=1}^T\tr(\Gamma_{t}^{-1}\EE[Z_tZ_t\tran  \mid \Fcal_t]) - \tr(\Gamma_t^{-1} Z_tZ_t\tran ) & \leq 2\sqrt{\sum_{t=1}^T\EE[(Z_t\tran \Gamma_t^{-1}Z_t)^2 \mid \Fcal_t]\log(2/\delta)} + 2\log(2/\delta)\\
& \leq 2\sqrt{\sum_{t=1}^T\tr(\Gamma_t^{-1}\EE[Z_tZ_t\tran \mid\Fcal_t]) \log(2/\delta)} + 2\log(2/\delta)\\
& \leq \frac{1}{2}\sum_{t=1}^T\tr(\Gamma_t^{-1}\EE[Z_tZ_t\tran \mid\Fcal_t]) + 4\log(2/\delta).
\end{align*}
Then rearranging and plugging back into our regret bound, we have
\begin{align*}
\reg(T)&\leq \sqrt{2T\log(2/\delta)} + 3\ellipsoid\sqrt{2T \left(\sum_{t=1}^T\tr(\Gamma_t^{-1}Z_tZ_t\tran ) + 4\log(2/\delta)\right)}\\
& \leq (1+6\ellipsoid)\sqrt{2T\log(2/\delta)} + 3\ellipsoid\sqrt{2T\sum_{t=1}^T\tr(\Gamma_t^{-1}Z_tZ_t\tran )}. \tag*\qedhere
\end{align*}
\end{proof}

To conclude the proof of the theorem, apply~\pref{lem:regret},
which applies on the last term on the RHS of~\pref{lem:regret_transform_general}. Overall, with probability at
least $1-2\delta$, we get
\begin{align*}
\reg(T)\leq (1+6\ellipsoid)\sqrt{2T\log(2/\delta)} + 3\ellipsoid\sqrt{2 T d(1+1/\lambda) \log(1+T/(d\lambda))}.
\end{align*}
Since $\lambda = \Theta(d\log(T/\delta))$ and
$\ellipsoid = \order(\sqrt{d \log(T)} + \sqrt{\log(T/\delta)})$, we
get with probability $1- \delta$,
\begin{align*}
\reg(T)\leq \order\left(d\sqrt{T}\log(T) + \sqrt{dT\log(T)\log(T/\delta)} + \sqrt{T\log(T/\delta)\log(1/\delta)}\right).
\end{align*}


\bibliographystyle{plainnat}
\bibliography{double_ucb}

\end{document}